\documentclass[twoside]{article} 
\usepackage[final]{colm2026_conference}
\usepackage[T1]{fontenc}
\usepackage{microtype}
\usepackage{hyperref}
\usepackage{url}
\usepackage{booktabs}
\usepackage{textcomp}
\usepackage{color-edits}
\usepackage{titletoc}
\addauthor[Deqing]{df}{magenta}
\addauthor[Tianyi]{tz}{teal}
\addauthor[Misha]{mb}{blue}
\addauthor[Vatsal]{vs}{red}
\addauthor[Robin]{rj}{blue}

\usepackage{lineno}

\definecolor{darkblue}{rgb}{0, 0, 0.5}
\definecolor{modblue}{HTML}{DCEBFA}
\hypersetup{colorlinks=true, citecolor=darkblue, linkcolor=darkblue, urlcolor=darkblue}

\usepackage{graphicx}

\title{Convergent Evolution: How Different Language Models\\Learn Similar Number Representations}

\newcommand{\usc}{\ensuremath{\textcolor[HTML]{990000}{\sigma}}}
\newcommand{\ucsd}{\ensuremath{\textcolor[HTML]{182B49}{\psi}}}
\author{\textbf{Deqing Fu}$^{\usc}$\hspace{2mm}
\textbf{Tianyi Zhou}$^{\usc}$\hspace{2mm}
\textbf{Mikhail Belkin}$^{\ucsd}$\hspace{2mm}
\textbf{Vatsal Sharan}$^{\usc}$\hspace{2mm}
\textbf{Robin Jia}$^{\usc}$\hspace{2mm}\\
$^{\usc}$University of Southern California \hspace{2mm} $^{\ucsd}$UC San Diego\\
\texttt{\{deqingfu,tzhou029,vsharan,robinjia\}@usc.edu, mbelkin@ucsd.edu}
}

\usepackage[table,dvipsnames]{xcolor}

\usepackage{amsmath,amsfonts,bm}

\def\eqref#1{equation~\ref{#1}}

\def\1{\bm{1}}

\def\vzero{{\bm{0}}}

\def\vmu{{\bm{\mu}}}

\def\ve{{\bm{e}}}

\def\vv{{\bm{v}}}

\def\vF{{\bm{F}}}

\def\mE{{\bm{E}}}

\def\mM{{\bm{M}}}

\def\mS{{\bm{S}}}

\def\mU{{\bm{U}}}

\DeclareMathAlphabet{\mathsfit}{\encodingdefault}{\sfdefault}{m}{sl}
\SetMathAlphabet{\mathsfit}{bold}{\encodingdefault}{\sfdefault}{bx}{n}

\def\sC{{\mathbb{C}}}

\def\sR{{\mathbb{R}}}

\DeclareMathOperator{\Tr}{Tr}

\usepackage{amsthm}
\usepackage{cleveref}
\usepackage{thmtools}
\usepackage{thm-restate}
\declaretheorem[name=Theorem]{theorem}
\declaretheorem[name=Lemma, sibling=theorem]{lemma}

\declaretheorem[name=Remark, sibling=theorem]{remark}

\usepackage{makecell}
\usepackage{tabularx}
\usepackage{pifont}
\newcommand{\cmark}{\ding{51}}

\usepackage{hwemoji}
\newcommand{\huggingface}{\scalerel*{\includegraphics[page=2380]{hwemoji-assets.pdf}}{X}}

\ifcolmpreprint
\fancypagestyle{firstpage}{
  \fancyhf{}
  \lhead{\vspace*{-\headsep}\vspace*{-\topmargin}\vspace*{-1in}\includegraphics[height=5mm]{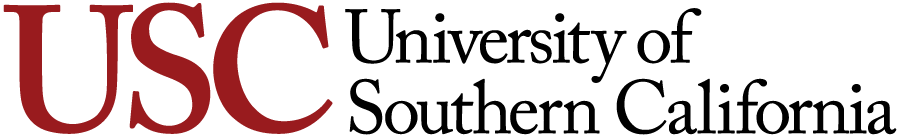}\hspace{1em}\includegraphics[height=5mm]{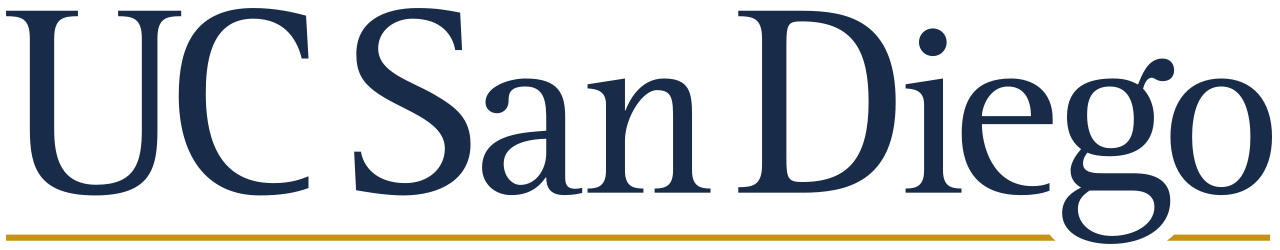}}
  \rhead{April 22, 2026}
}

\fancypagestyle{otherpages}{
  \fancyhf{}
  \fancyhead[RO]{\textit{Fu, Zhou, Belkin, Sharan, \& Jia}}
  \fancyhead[LE]{ \textit{Convergent Evolution}}
  \fancyhead[RE,LO]{\thepage}
}

\fi

\usepackage{titlesec}
\titlespacing\paragraph{0pt}{0pt}{2pt plus 1pt minus 1pt}
\titlespacing\section{4pt}{4pt}{0pt}
\titlespacing\subsection{2pt}{2pt}{0pt}
\begin{document}

\ifcolmsubmission
\linenumbers
\fi

\maketitle

\begin{abstract}
Language models trained on natural text learn to represent numbers using periodic features with dominant periods at $T=2, 5, 10$.
In this paper, we identify a two-tiered hierarchy of these features: while Transformers, Linear RNNs, LSTMs, and classical word embeddings trained in different ways all learn features that have period-$T$ spikes in the Fourier domain, only some learn \emph{geometrically separable} features that can be used to linearly classify a number mod-$T$.
To explain this incongruity, we prove that Fourier domain sparsity is necessary but not sufficient for mod-$T$ geometric separability.
Empirically, we investigate when model training yields geometrically separable features, finding that the data, architecture, optimizer, and tokenizer all play key roles.
In particular, we identify two different routes through which models can acquire geometrically separable features: they can learn them from complementary co-occurrence signals in general language data, including text-number co-occurrence and cross-number interaction, or from multi-token (but not single-token) addition problems.
Overall, our results highlight the phenomenon of \emph{convergent evolution} in feature learning: A diverse range of models learn similar features from different training signals.

{\Large\huggingface} Models: \href{https://hf.co/collections/deqing/convergent-evolution}{https://hf.co/collections/deqing/convergent-evolution}
\end{abstract}

\section{Introduction} \label{sec:introduction}



Language models trained on natural language develop periodic representations for number tokens.
For many Transformer-based language models, \citet{zhou2024pre} show that the embeddings  of integer tokens have consistent spikes in the Fourier domain at periods $T = 2, 5$, and $10$.
Such periodic features have also been well-documented in models'
intermediate representations \citep{levy2025language} and in the model mechanisms that implement addition \citep{zhou2024pre, kantamneni2025language}.
\citet{engels2025not} even find analogous periodic structures for other cyclical concepts such as days of the week and months of the year.
These findings have been broadly interpreted as evidence that language models learn structured numerical representations through next-token prediction.

In this paper, we first demonstrate that this phenomenon is far more general than previously recognized.
\Cref{fig:fourier_universality} shows that the same $T = 2, 5, 10$ spikes appear not only in Transformers \citep{transformer} of varying scale (GPT-2 \citep{gpt2}, GPT-OSS \citep{gptoss}, Llama-3 \citep{llama3}, Llama-4 \citep{llama4}, and DeepSeek-V3 \citep{deepseekv3}), but also in non-Transformer LLMs (Mamba \citep{mamba}, Falcon-Mamba \citep{falcon-mamba}, xLSTM \citep{xlstm}, Kimi-Linear \citep{kimi-linear}) and classical word embeddings (GloVe \citep{pennington-etal-2014-glove} and FastText \citep{bojanowski-etal-2017-fasttext}).
Even the raw token frequency distribution of numbers in the training corpus, with no model at all, exhibits the same periodic spectrum (see \Cref{fig:fourier_no_probe}).
We view this universality as a case of \textit{convergent evolution}: different systems independently develop the same representation because they share the same constraints from training data and tokenization.
In biology, convergent evolution refers to the independent emergence of similar traits in unrelated organisms facing shared environmental pressures, such as the independent evolution of eyes in vertebrates and cephalopods \citep{McGhee2011convergent}.
Fourier features in number embeddings are analogous: a shared trait that arises from shared constraints on the learning process.

But do these Fourier spikes indicate that models have learned
functional numerical structure? We identify a two-tiered hierarchy of periodic features: only some systems with Fourier spikes cleanly encode modular arithmetic properties in their embeddings. 
By this we mean that the residue class $n \bmod T$ is linearly decodable from the embedding $\ve(n)$. Period-$T$ features naturally group numbers by their value mod $T$, and the linear representation
hypothesis \citep{park2023lrh} conjectures that such structure should be accessible via linear probes.
We call the emergence of Fourier spikes \textit{spectral convergence} and the emergence of linearly separable mod-$T$ classes \textit{geometric convergence}.
Spectral convergence appears in almost every system we examine, but
geometric convergence does not: Transformers and linear RNNs trained on 10 billion tokens develop embeddings where mod-$T$ classes are linearly separable, while LSTMs trained on identical data develop more prominent Fourier spikes but achieve chance-level probing.
Understanding what separates these two levels of convergence is the central question of this paper. We summarize our contributions below.

\begin{figure}[t]
    \centering
    \includegraphics[width=0.49\linewidth]{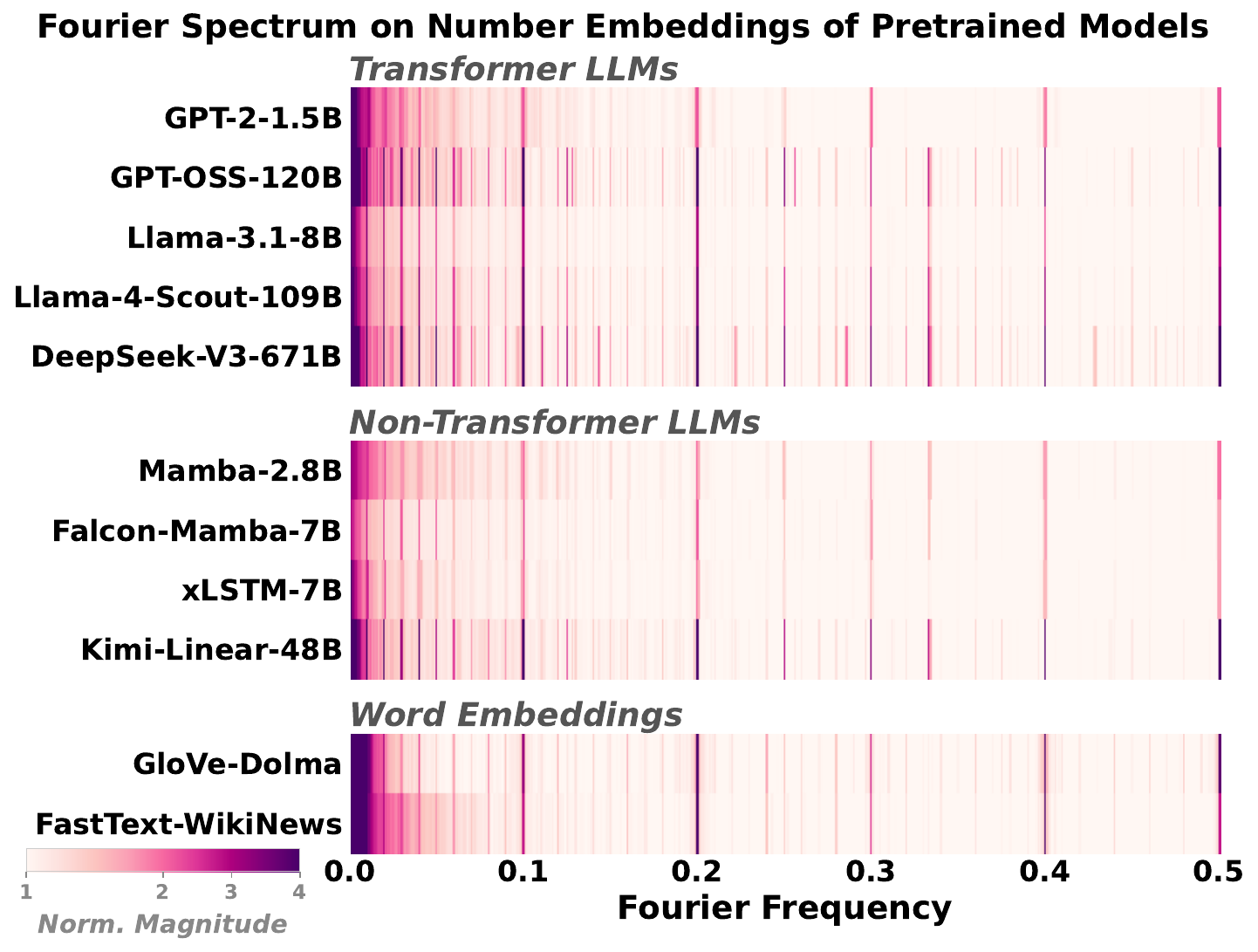}
    \includegraphics[width=0.49\linewidth]{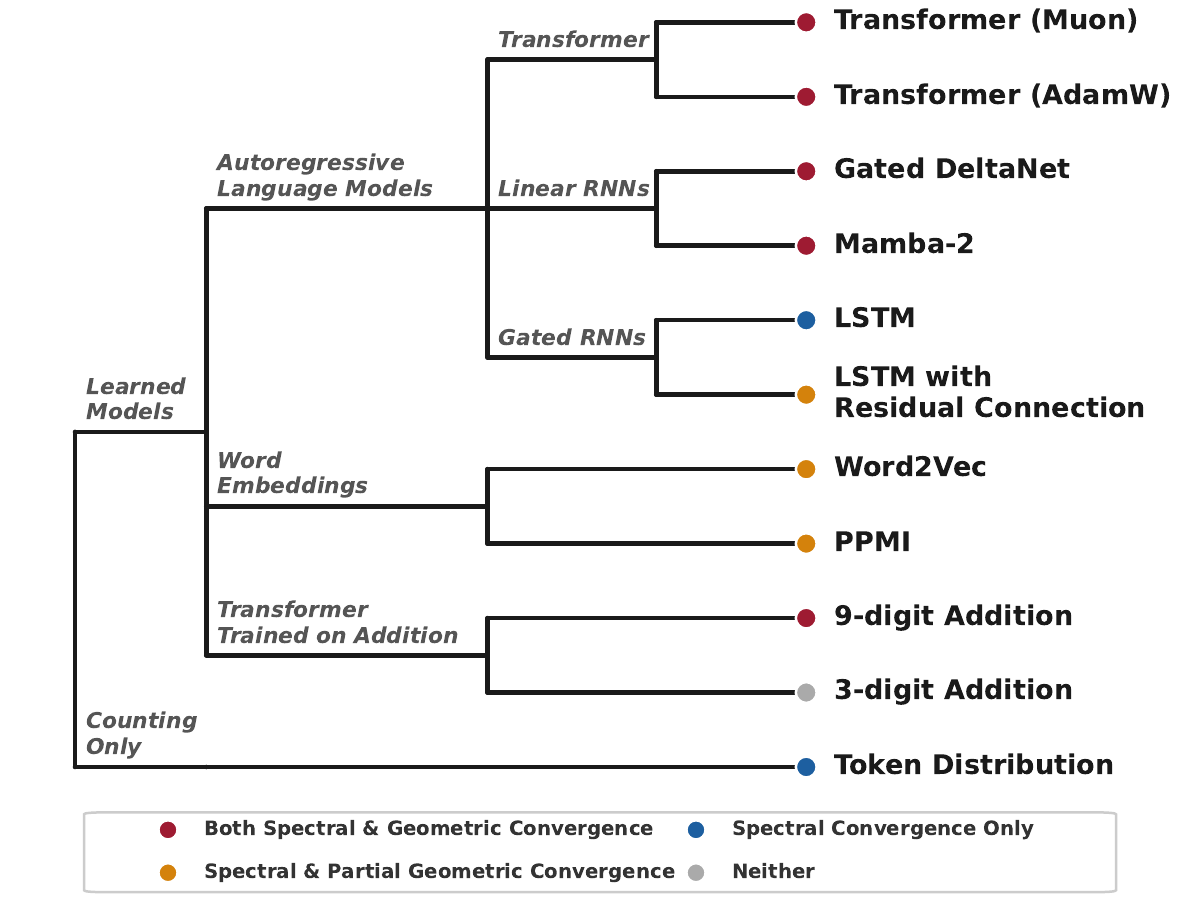}
    \caption{\textbf{Universality of Fourier Features and Convergent Evolution}. (\textit{Left}) Fourier spectrum of number embeddings across three architecture families: Transformer LLMs, non-Transformer LLMs, and classical word embeddings. Each row shows the median-normalized magnitude at each Fourier frequency. All models exhibit consistent spikes at frequencies of period $T = 2, 5, 10$, etc. (\textit{Right}) Convergent Evolution of various models studied in this paper. It shows two types of convergence: \textit{spectral convergence} where models learn Fourier spikes, and \textit{geometric convergence} where models learn modular probes.}
    \label{fig:fourier_universality}
\end{figure}

\paragraph{Fourier spikes are universal but probing is not.} We show that Fourier spikes at $T = 2, 5, 10$ appear in every system we examine, spanning Transformer and non-Transformer LLMs, classical word embeddings, and even the raw token frequency distribution (Figure~\ref{fig:fourier_universality}). We demonstrate both theoretically (\Cref{thm:fourier}) and empirically (\Cref{fig:fourier_no_probe}) that Fourier spikes are necessary but not sufficient for mod-$T$ probing, and explain how models with similar spectra can have vastly different probing accuracy (\S\ref{sec:problem_setup}).

\paragraph{Geometric convergence requires data, architecture, and optimizer to align.}
Through controlled experiments on 300M-parameter models trained on identical data, we isolate three factors that jointly determine whether mod-$T$ classes become linearly separable in the embeddings.
Our experimental methodology can be viewed as a form of \emph{structure attribution}. Analogous to how influence functions \citep{koh2017influence} or Shapley values \citep{ghorbani2019data} attribute model predictions to individual training examples, our controlled perturbations attribute the emergence of learned representations to specific structural properties of the data distribution.
We find that geometric convergence depends on several complementary data signals: perturbations that progressively remove text-number co-occurrence,
cross-number interaction, or context length each degrade probing,
while Fourier spikes persist across all conditions
(\S\ref{ssec:data}). 
The architecture plays a critical role: Transformers and linear RNNs achieve strong probing while LSTMs trained on the same data remain at chance (\S\ref{ssec:arch_opt}) unless we add residual connections (\S\ref{app:lstm}). Regardless of the optimizer, both Transformers and linear RNNs learn the same Fourier spectrum but different probing performance.

\paragraph{Convergent evolution takes a different form under arithmetic task pressure.}
In models trained on addition from scratch, the tokenizer determines whether Fourier structure emerges (\S\ref{sec:addition}).
Multi-token addition requires computing each output token as a sum modulo 1000, forcing the model to solve modular subproblems that produce circular representations.
Single-token addition admits multiple strategies, and the resulting representations randomly depend on the optimizer.

\section{Related Work} \label{sec:related_work}


\paragraph{Fourier Features.}
Fourier features have been used for many years in computer vision as edge and orientation detectors \citep{olshausen1997sparse, olah2020overview, fiquet2023polar}. The original Transformer applies sinusoidal position encodings \citep{transformer}, and several works have found that explicitly injecting high-frequency components into inputs helps with spatial and numerical tasks \citep{tancik2020fourier, he2023frequency, hua2024fourier}.
Recently, this structure is also found to emerge without being designed in. Transformers trained on modular addition embed numbers on a circle and rotating to compute the answer \citep{nanda2023progress, zhong2023clock, gromov2023grokking}. The same holds in pretrained LLMs that number token embeddings break into Fourier components, and there are recognizable addition circuits in the attention and MLP layers \citep{zhou2024pre, kantamneni2025language, levy2025language}.  \citet{zhou2025fone} show that hard-coding these Fourier features improves arithmetic learning. 
These studies document spectral structure but do not test whether it implies geometric separability — a distinction our work shows is critical.
In this paper, we further study the question these papers leave open, where the structure comes from in the first place.

\paragraph{Mechanistic Interpretability.}
Mechanistic interpretability aims to reverse-engineer the  representations and algorithms learned by language models. 
The linear representation hypothesis \citep{park2023lrh} conjectures that high-level concepts, if learned, should be linearly decodable from model representations. Probing \citep{orgad2025llms, kossen2024semantic} is the standard tool for this.
However, number representations are not linearly encoded \citep{nanda2023progress, zhong2023clock, gromov2023grokking}. \citet{karkada2026symmetry} show that the winding geometry LLMs learn for months, years, and locations also arises in word2vec, and trace it theoretically to translation symmetry in the co-occurrence statistics. We attribute it empirically, showing that Fourier structure is universal but linearly separable geometry emerges only under specific data, architecture, tokenization, and optimizer choices. 
\citet{allen2025physics} use controlled synthetic pretraining to isolate which capabilities emerge from which architectural and data choices. 
Separately, \citet{huh2024platonic} argue that representations across models and modalities are converging toward a shared statistical model of reality, measured via global kernel alignment. We ask a different question: whether models that converge on the similar representation of a specific concept have learned the same functional structure, and show that they can diverge fundamentally.
In this paper, we vary tokenization, architecture, optimizer, and task to understand the convergent evolution of number representations.

\section{Problem Setup and Preliminary Analysis}
\label{sec:problem_setup}

\begin{figure}[t]
    \centering
    \includegraphics[width=\linewidth]{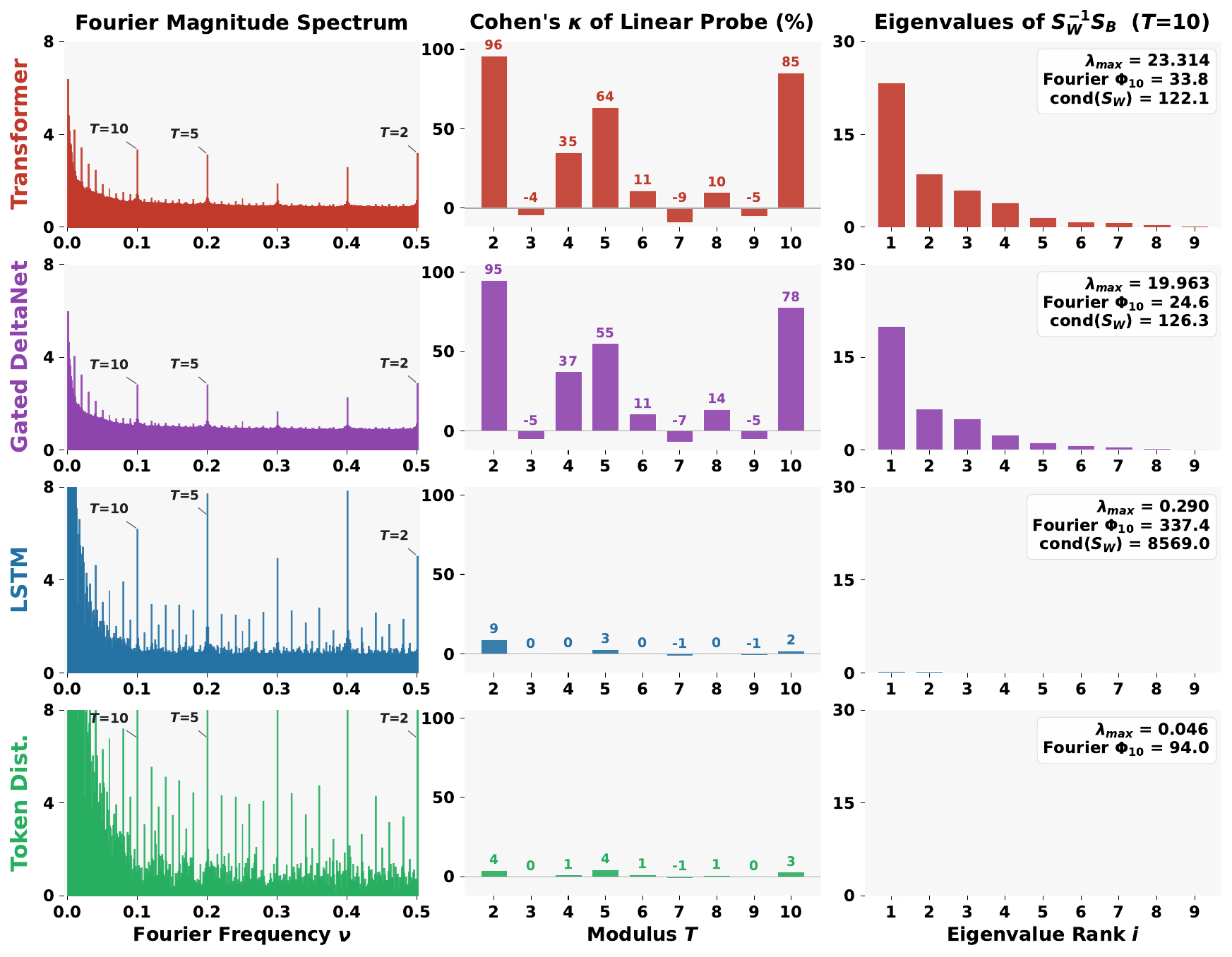}
    \caption{\textbf{A ``Spiky'' Fourier Spectrum Does Not Imply Good Feature Learning.} (\textit{Left}) Token embeddings of {\color{Maroon}Transformer}, {\color{Purple}Gated DeltaNet} and {\color{RoyalBlue}LSTM}, and even simply the {\color{Green}Number Token Distribution Frequency} exhibit distinct Fourier spikes at $T = 2, 5$, and $10$. (Middle) Linear probes reveal only the Transformer and Gated DeltaNet learned functional modular arithmetic with high Cohen's $\kappa$, while others remain at chance. (\textit{Right}) \Cref{thm:fourier} explains this discrepancy through the internal noise structure of the $T=10$ embeddings.}
    \label{fig:fourier_no_probe}
\end{figure}

We study the token embeddings of numbers $0$ through $N-1$ in language models, where $N = 1000$ corresponds to the set of numbers that receive single-token representations in the Llama-3 tokenizer \citep{llama3}.
Let $\ve(n) \in \sR^d$ denote the token embedding of number $n$.
To detect periodic structure in these embeddings, following \citet{zhou2024pre}, we compute the discrete Fourier transform along the token index:
\[
\vF_\nu = \frac{1}{\sqrt{N}} \sum_{n=0}^{N-1} \ve(n) \, e^{-2\pi i \nu n} \in \sC^d, \qquad \nu = \frac{k}{N},\; k = 0, \ldots, N-1.
\]
The power at frequency $\nu$ is $\|\vF_\nu\|^2 = \sum_{j=1}^d |F_\nu^{(j)}|^2$.
A \emph{Fourier spike at period $T$} refers to a visible peak in $\|\vF_{1/T}\|^2$ relative to neighboring frequencies. For a period $T$ dividing $N$, define the mod-$T$ residue classes $C_r = \{n : n \equiv r \pmod{T}\}$ for $r = 0, \ldots, T-1$, each of size $|C_r| = N/T$.
To evaluate whether the embeddings encode modular arithmetic at period $T$, we train a \emph{linear probe} ($T$-class logistic regression) to predict $n \bmod T$ from $\ve(n)$.

A natural question is whether the presence of a Fourier spike at period $T$ guarantees good $\bmod$-$T$ probing accuracy. As we will show in \Cref{fig:fourier_no_probe}, the answer is strikingly no: an LSTM trained on the same data as a Transformer develops larger Fourier power at $T = 10$ yet achieves chance-level probing. 
The following result shows how this is possible and demonstrates that the presence of a Fourier spike is a \textit{necessary} but not a \textit{sufficient} condition for learning modular probes.

\begin{restatable}{theorem}{maintheorem}
\label{thm:fourier}
Given embeddings $\{\ve(n)\}_{n=0}^{N-1}$ and residue classes $\{C_r\}_{r=0}^{T-1}$ defined above, let the class means $\vmu_r$, grand mean $\vmu$, between-class scatter matrix $\mS_B$, and within-class scatter matrix $\mS_W$ be
\[
\vmu_r = \frac{1}{|C_r|}\sum_{n \in C_r} \ve(n), \qquad \vmu = \frac{1}{N}\sum_{n=0}^{N-1} \ve(n),
\]
\[
\mS_B = \frac{1}{T}\sum_{r=0}^{T-1}(\vmu_r - \vmu)(\vmu_r - \vmu)^\top, \qquad \mS_W = \frac{1}{N}\sum_{r=0}^{T-1}\sum_{n \in C_r}(\ve(n) - \vmu_r)(\ve(n) - \vmu_r)^\top.
\]
Let $\Phi_T = \sum_{\ell=1}^{T-1} \|\vF_{\ell/T}\|^2$ be the total power at harmonics of period $T$, and $H_T = \{0, 1/T, 2/T, \ldots, (T{-}1)/T\}$ be the set of harmonic frequencies.
\begin{enumerate}
\item[\emph{(i)}]
If $\Phi_T = 0$, then $\mS_B = \bm{0}$ and no linear probe can classify $n \bmod T$ above chance.

\item[\emph{(ii)}] 
For any $T \ge 2$, $C > 0$, and $\varepsilon > 0$, there exist $N$ divisible by $T$ and embeddings $\ve(n)$ satisfying $\Phi_T > C$ yet no $T$-class linear classifier achieves accuracy above $1/T + \varepsilon$, i.e., no more than $\varepsilon$ above random guessing.

\end{enumerate}
\end{restatable}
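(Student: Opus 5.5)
The plan is to express the class means through the Fourier coefficients at the harmonic frequencies $H_T$. Write $\omega = e^{2\pi i/T}$ and use the indicator identity $\mathbf{1}[n\equiv r \bmod T] = \frac1T\sum_{\ell=0}^{T-1}\omega^{\ell(r-n)}$. Because $T \mid N$, the frequency $\ell/T$ is a grid frequency $k/N$ with $k=\ell N/T$, and the indicator identity gives
\[
\vmu_r = \frac{T}{N}\sum_{n\in C_r}\ve(n) = \frac{1}{\sqrt N}\sum_{\ell=0}^{T-1}\omega^{\ell r}\,\vF_{\ell/T}, \qquad \vmu = \frac{1}{\sqrt N}\vF_0 .
\]
Hence $\vmu_r-\vmu = \frac{1}{\sqrt N}\sum_{\ell=1}^{T-1}\omega^{\ell r}\vF_{\ell/T}$, which is a length-$T$ inverse DFT in $r$. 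Applying Parseval in $r$ gives $\sum_r\|\vmu_r-\vmu\|^2 = \frac{T}{N}\Phi_T$, and therefore $\Tr \mS_B = \Phi_T/N$. This identity is the backbone of both parts: between-class structure lives exactly on the harmonics $H_T\setminus\{0\}$.

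For (i), if $\Phi_T=0$ then every $\vF_{\ell/T}$ with $\ell\ne0$ vanishes. So all class means coincide, $\vmu_r=\vmu$, and $\mS_B=\bm 0$ because it is a sum of PSD rank-one terms with zero trace. I will then argue that every linear functional $\vv^\top\ve$ has the same class-conditional mean for each $r$, so the Fisher ratio $\vv^\top\mS_B\vv/\vv^\top\mS_W\vv$ is identically zero. Consequently LDA, and any classifier built from class-mean (first-moment) information, such as the nearest-class-mean or Gaussian shared-covariance rule, assigns every point the same score profile and achieves exactly chance.

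The step I expect to be the main obstacle is the phrase ``no linear probe'' in (i). Taken literally, for an arbitrary argmax-of-affine classifier it can fail. For example, with $T=2$ in one dimension, class $0$ at $\{-1,1\}$ and class $1$ at $\{0,0\}$ have equal means, yet a threshold at $0.5$ gets $3/4$. So I will state and prove (i) in the first-moment / Fisher-discriminant sense: no direction separates the class means, so the Fisher ratio is zero and the mean-based linear rules are at chance. I will add a remark that higher-moment differences are not captured by $\mS_B$.

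For (ii), I will take $d=1$ (or $d=2$) and set $\ve(n) = \delta_N\cos(2\pi n/T) + \xi_n$, where the $\xi_n$ are i.i.d. draws from a fixed continuous distribution such as $\mathcal N(0,1)$; any fixed realization satisfying the properties below will do. The deterministic part contributes power of order $N\delta_N^2$ at $\ell=\pm1$. The noise contributes $O(1)$ in expectation and with high probability, and its cross term with the deterministic part is $O(\sqrt N\,\delta_N)$, so both are dominated by $N\delta_N^2$. Choosing $\delta_N = N^{-1/4}$ gives $\Phi_T \gtrsim \sqrt N > C$ for large $N$. For the accuracy bound, linear $T$-class classifiers in fixed dimension form a class of finite VC/Natarajan dimension. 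A uniform convergence bound then shows that, with high probability, the empirical accuracy of every such classifier is within $\varepsilon/2$ of its population accuracy. The population accuracy is at most $\frac1T + \frac{1}{T}\sum_r \mathrm{TV}(P_r, \bar P)$, where $P_r = \mathcal N(\delta_N\cos(2\pi r/T),1)$ and $\bar P = \frac1T\sum_r P_r$. Since $\mathrm{TV}(P_r,\bar P) = O(\delta_N)\to0$, this is at most $\frac1T + \frac{\varepsilon}{2}$ for large $N$. Fixing one realization on which both high-probability events hold gives the required embeddings.
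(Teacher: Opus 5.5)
\paragraph{Part (i).} Your identity $\Tr\mS_B=\Phi_T/N$ is exactly the paper's \Cref{lem:fourier-variance-identity}. The paper gets it by applying the unitary $T$-point DFT to the matrix of class means, which is the same computation as your roots-of-unity indicator. Your objection to the literal wording of (i) is correct. With $N=4$ and $(\ve(0),\dots,\ve(3))=(-1,0,1,0)$ one has $\vF_{1/2}=0$, hence $\Phi_2=0$, yet thresholding at $1/2$ classifies $3$ of the $4$ points correctly. The paper's proof of (i) stops at ``all class means coincide'' and then asserts the conclusion. So it too only supports the first-moment/Fisher version you propose to prove, and your restriction is the defensible reading rather than a shortfall.

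\paragraph{Part (ii).} Here you take a genuinely different route, and it works. The paper's example is deterministic: $e(n)=A\,(n\bmod T)+B\lfloor n/T\rfloor$ with $B>(T-1)A$. Sorted by value, the labels cycle through $0,1,\dots,T-1$ exactly $N/T$ times. The argmax of $T$ affine functions on $\mathbb{R}$ cuts the line into at most $T$ intervals, so a counting argument gives accuracy at most $1/T+(T-1)/N$, while $A$ alone sets $\Phi_T$. Your example instead hides a weak periodic mean in i.i.d.\ noise. You bound the population accuracy of \emph{every} classifier by $1/T+O(\delta_N)$ via total variation, then transfer this to the $N$ points by uniform convergence. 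The paper's route is explicit and elementary, with an exact finite-$N$ bound and no probability. Yours is nonconstructive and uses heavier tools, but it shows the failure is generic (almost every noise realization works). It also matches the ``periodic signal swamped by within-class noise'' mechanism the paper invokes for the LSTM.

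Two points for the write-up. First, the labels are fixed with exactly $N/T$ points per class, so your samples are independent but not identically distributed. Handle this class by class: in $d=1$ each decision region $\{x:h(x)=r\}$ of a linear $T$-class rule is an interval. A VC bound for intervals, applied to the $N/T$ i.i.d.\ draws from each $P_r$, then suffices. Second, the choice $\delta_N=N^{-1/4}$ is more than you need. A fixed $\delta$ small enough that the TV term is below $\varepsilon/2$ already gives $\Phi_T$ of order $N\delta^2\to\infty$. More simply, linear-classifier accuracy is invariant under $\ve\mapsto c\,\ve$ while $\Phi_T$ scales by $c^2$, so the constant $C$ never constrains anything. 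The same rescaling also repairs the paper's choice of $A$, which gives $\Phi_T=C$ rather than the strict $\Phi_T>C$.
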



\Cref{thm:fourier} establishes that $\Phi_T > 0$ is necessary but not sufficient. 
A natural follow-up is: what \emph{quantitatively} determines whether a spiky $\Phi_T$ translates into geometric separability? 

\begin{remark}
The gap between two parts of \Cref{thm:fourier} can be made more precise through the lens of Fisher's Linear Discriminant Analysis \citep{fisher1936lda}. Assume $d \geq T - 1$ and $S_W$ is invertible.
While a linear probe for $T$ classes separates data across a $(T-1)$-dimensional subspace, the absolute ceiling on its performance is dictated by its principal axis of separation: the single direction $v$ that maximizes the ratio of between-class to within-class variance, given by the Rayleigh quotient $\frac{v^\top \mS_B v}{v^\top \mS_W v}$.
The maximum achievable separability along this optimal axis is exactly the largest generalized eigenvalue of the scatter matrices, $\lambda_{\max}(\mS_W^{-1} \mS_B)$. We show (see \S~\ref{app:bounds}) that this maximum discriminant satisfies:
\[
\frac{1}{(T-1) \cdot \mathrm{cond}(\mS_W) } \cdot \frac{\Phi_T}{N \cdot \lambda_{\min}(\mS_W)} \;\leq\; \lambda_{\max}(\mS_W^{-1} \mS_B) \;\leq\; \frac{\Phi_T}{N \cdot \lambda_{\min}(\mS_W)}.
\]
The Fourier power $\Phi_T$ only guarantees a large total variance among the class means, which drives $\Phi_T = \Tr(\mS_B) / N$ in the numerator. It ensures the class centers are dispersed in space. However, actual linear separability depends on how this signal aligns with the within-class scatter, $\mS_W$. A large condition number, $\text{cond}(\mS_W) = \lambda_{\max}(\mS_W)/\lambda_{\min}(\mS_W)$, drastically lowers the minimum bound. If the periodic signal aligns with the dimensions of maximum within-class noise, the massive $\Phi_T$ is entirely swallowed by within-class variance. This drives $\lambda_{\max}$ toward zero and forces the entire $(T-1)$-dimensional subspace to collapse into overlapping classes.
\end{remark}

\begin{figure}[t]
    \centering
    \includegraphics[width=\linewidth]{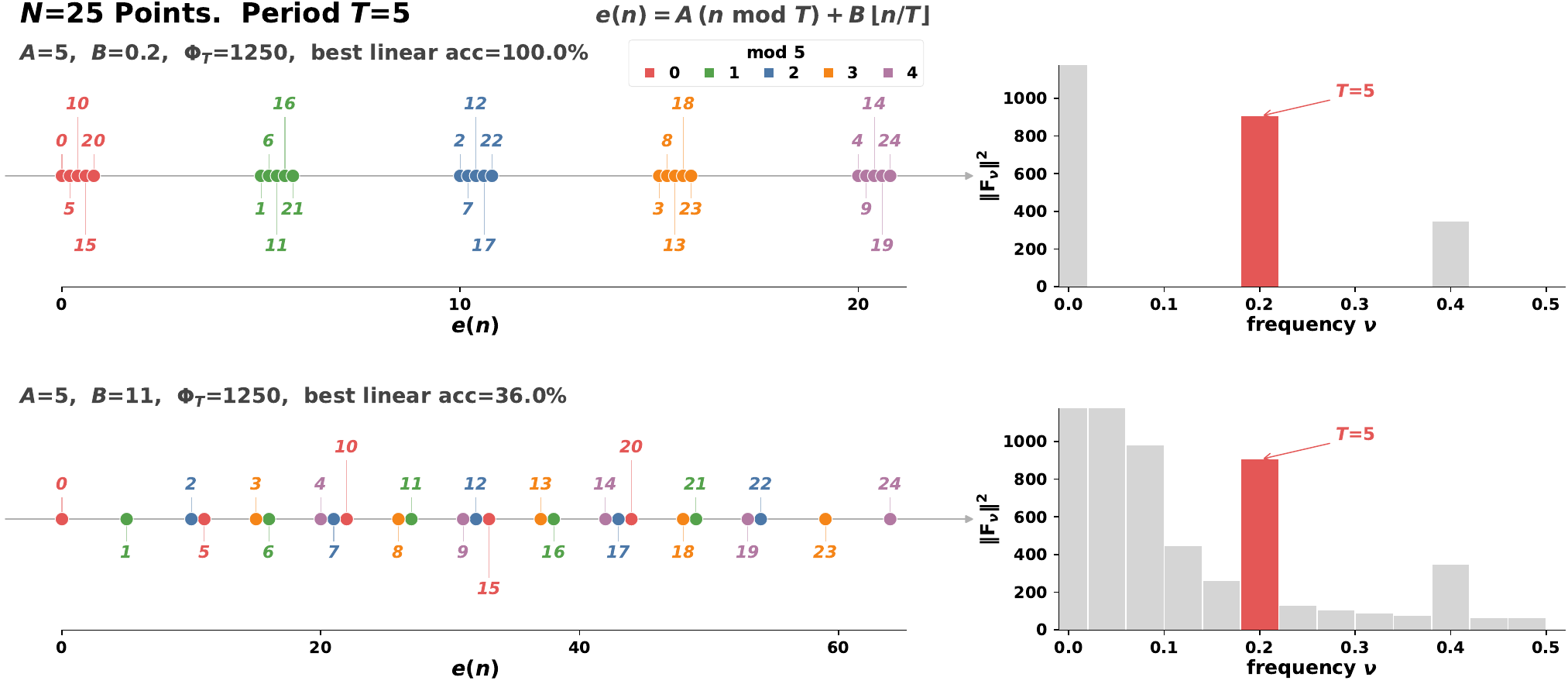}
    \caption{\textbf{Examples for the Proof of \Cref{thm:fourier} Part (ii) in \S~\ref{app:insufficient}}. In the proof, every number $n \in \{0,\dots,N-1\}$ has a unique decomposition $n = r + mT$ with residue $r = n \bmod T \in \{0,\dots,T{-}1\}$ and block index $m = \lfloor n/T \rfloor \in \{0,\dots,K{-}1\}$, where $K = N/T$.
    We set the embedding {\color{Maroon}$e(n) = Ar + Bm$}, so that $A$ controls the between-class scatter $S_B$ (and hence $\Phi_T = N \cdot S_B$) while $B$ controls only the within-class scatter $S_W$, with neither parameter affecting the other.
    As illustrated in the figure, fixing $A$ produces a persistent Fourier spike at period~$T$ regardless of~$B$: when $B$ is small the residue classes cluster into linearly separable groups, but as $B$ grows the embeddings interleave across classes so that the best linear classification accuracy goes closer to random guessing $1/T$.}
    \label{fig:thm_part_ii_examples}
    \vspace{-2ex}
\end{figure}

The proof of \Cref{thm:fourier} is given in Appendix~\ref{app:proof-fourier}, where part (i) follows naturally from Fourier identities in \Cref{lem:fourier-variance-identity} which shows that $\Tr(\mS_B) = \Phi_T / N$, and part (ii) constructs examples (see \Cref{fig:thm_part_ii_examples}) so that $\Phi_T$ can be arbitrarily large but residue classes are not linearly separable.
We now ask two empirical questions: how common are Fourier spikes in practice, and do they actually predict mod-$T$ probe accuracy?


Throughout this paper, unless stated otherwise, we train models with around 300 million parameters on 10B tokens from FineWeb-Edu \citep{lozhkov2024fineweb-edu} using the Llama-3 tokenizer, which assigns single tokens to each integer from 0 to 999 such that $N = 1000$. We include architecture details in the Appendix \ref{app:model_training}.
We evaluate number representations using two complementary measurements: (1) the \emph{Fourier magnitude spectrum}, computed as the power $\|\vF_\nu\|^2$ normalized by the median across frequencies $\nu$, and (2) the \emph{mod-$T$ probe accuracy}, measured by Cohen's $\kappa$ of a classifier trained to predict $n \bmod T$ from the token embedding $\ve(n)$. Cohen's $\kappa$ adjusts for the baseline accuracy of random guessing ($1/T$ for balanced classes), so that $\kappa = 0$ corresponds to chance and $\kappa = 100\%$ to perfect classification regardless of $T$. We use three probe types: linear (logistic regression), MLP, and RFM kernel \citep{radhakrishnan2024rfm}; unless noted, we report linear probe results, with the others in \S~\ref{app:mlp_rfm}. We report probe performance averaged over 30 runs: 3 random seeds, each with 10-fold cross-validation. The first measurement detects periodic structure while the second tests whether that structure supports mod-$T$ classification.

\paragraph{Fourier spikes are universal.} \Cref{fig:fourier_universality} shows that every  pretrained model we examine, spanning Transformer LLMs, non-Transformer LLMs,  and classical word embeddings, exhibits peaks at $\nu = 1/10, 1/5, 1/2$. \textit{Spectral convergence} is universal as long as natural languages are used for training. This holds across architectures across fundamentally different learning algorithms and across models never explicitly trained on numerical tasks. 

\paragraph{Fourier spikes do not imply modular arithmetic.}
Figure~\ref{fig:fourier_no_probe} presents a controlled comparison under our training setup: a Transformer, a Gated DeltaNet \citep{gdn}, an LSTM \citep{lstm} (all around 300M parameters), and the raw number token frequency distribution from the training corpus, where each number $n$ is represented by its scalar corpus frequency $p_n$ via counting rather than a learned embedding.
The left column shows that all four produce qualitatively similar Fourier spikes at periods $T = 2, 5, 10$. The middle column shows mod-$T$ probe accuracy: the Transformer and Gated DeltaNet achieve $\kappa = 96$ and $95$ at $T = 2$, and $\kappa = 85$ and $78$ at $T = 10$, while the LSTM and the token distribution remain at chance across all moduli. The right column explains this gap through Theorem~\ref{thm:fourier}.
The LSTM has \emph{larger} Fourier power $\Phi_{10}$ than the Transformer, yet its Fisher discriminant $\lambda_{\max}(\mS_W^{-1}\mS_B)$ is two orders of magnitude smaller.
The difference lies in the condition number $\mathrm{cond}(\mS_W)$: the LSTM's within-class scatter is highly anisotropic, so the periodic signal is buried under within-class variance and the classes overlap despite visible Fourier spikes.
Several recent studies have inspected Fourier spectra and concluded that models have learned modular structure \citep{nanda2023progress,zhou2024pre}.
Our analysis shows why this inference is unreliable: a visible spike at period $T$ guarantees $\Tr(\mS_B) > 0$, but probe accuracy depends on the eigenspectrum of $\mS_W^{-1}\mS_B$, which the Fourier power spectrum alone does not determine.

We call this arrangement of embeddings into linearly separable mod-$T$ classes \emph{Geometric Convergence}. 
Unlike spectral convergence, geometric convergence is selective: only certain combinations of data, architecture, and optimizer produce it.
When those align, however, it is robust across scale: models from 124M to
671B parameters achieve near-perfect mod-2, 5, and 10 probes, even for
multi-token numbers (\S\ref{app:mod_probe_more},
\S\ref{app:multi_token_probe}).
Both are instances of \textbf{convergent evolution}: different systems arriving at similar representations because of shared constraints. 
The next sections identify what drives each.

\section{Convergent Evolution in Language Model Pretraining}
\label{sec:pretraining}

Spectral convergence requires only the periodic frequency distribution of number tokens, but geometric convergence is selective.
We now ask which constraints must be present for geometric convergence to emerge. 
Through controlled experiments that vary one factor at a time, we identify three: the data signal the model receives (\S\ref{ssec:data}), the architecture, and the optimizer (\S\ref{ssec:arch_opt}).
All experiments use 300M-parameter models trained on 10B tokens from FineWeb-Edu \citep{lozhkov2024fineweb-edu} with the Llama-3 tokenizer, unless stated otherwise.

\subsection{Structural Attribution to Data} \label{ssec:data}
To isolate the environmental pressures driving convergence, we fix the architecture (300M Transformer) and optimizer (Muon, \citealp{jordan2024muon}) and vary only the training data.
We apply controlled perturbations (\Cref{tab:data_configs}) that each remove a specific type of co-occurrence while leaving others intact, attributing the emergence of spectral and geometric convergence to specific structural properties of the data.


\paragraph{Spectral convergence requires only token frequencies.}
All perturbations produce nearly identical Fourier spectra (\Cref{fig:data}, left), including \texttt{Unigram Replace}, which destroys all co-occurrence structure by independently resampling every number token from its marginal distribution.
As predicted by the universality observed in \S\ref{sec:problem_setup}, the periodic frequency distribution of number tokens is sufficient to produce Fourier spikes, and no co-occurrence information is needed.

\begin{table}[t]
\centering
\small
\caption{Data perturbation used in \S\ref{ssec:data}. All models on the 10B tokens from FineWeb-Edu.}
\label{tab:data_configs}
\rowcolors{1}{white}{blue!4}
\begin{tabularx}{\textwidth}{l >{\hsize=1.2\hsize}X >{\hsize=0.8\hsize}X}
\toprule
\textbf{Configuration} & \textbf{Perturbation} & \textbf{Structure of Data Removed} \\
\midrule
\texttt{Original} & $-$ & $-$ \\
\texttt{Isolate-}$k$ & Each sequence contains at most $k$ numbers token via packing. We use $k = 1, 2,$ and $8$. & Interaction across numbers. $k = 1$ indicates no interaction. \\
\texttt{ContextLength-}$\ell$ & Sequences split into windows of $\ell$. We use $\ell = 2, 4, 8,$ and $64$ & Role of broad context \\
\texttt{Swap Numbers} & Number token sequence replaced by that of another sequence, keeping number $n$-gram & Number $\leftrightarrow$ text association \\
\texttt{Unigram Replace} & Every number token resampled i.i.d.\ from marginal distribution to replace the original & Co-occurrence v.s.\ frequency \\
\bottomrule
\end{tabularx}
\end{table}

\begin{figure}
    \centering
    \includegraphics[width=\linewidth]{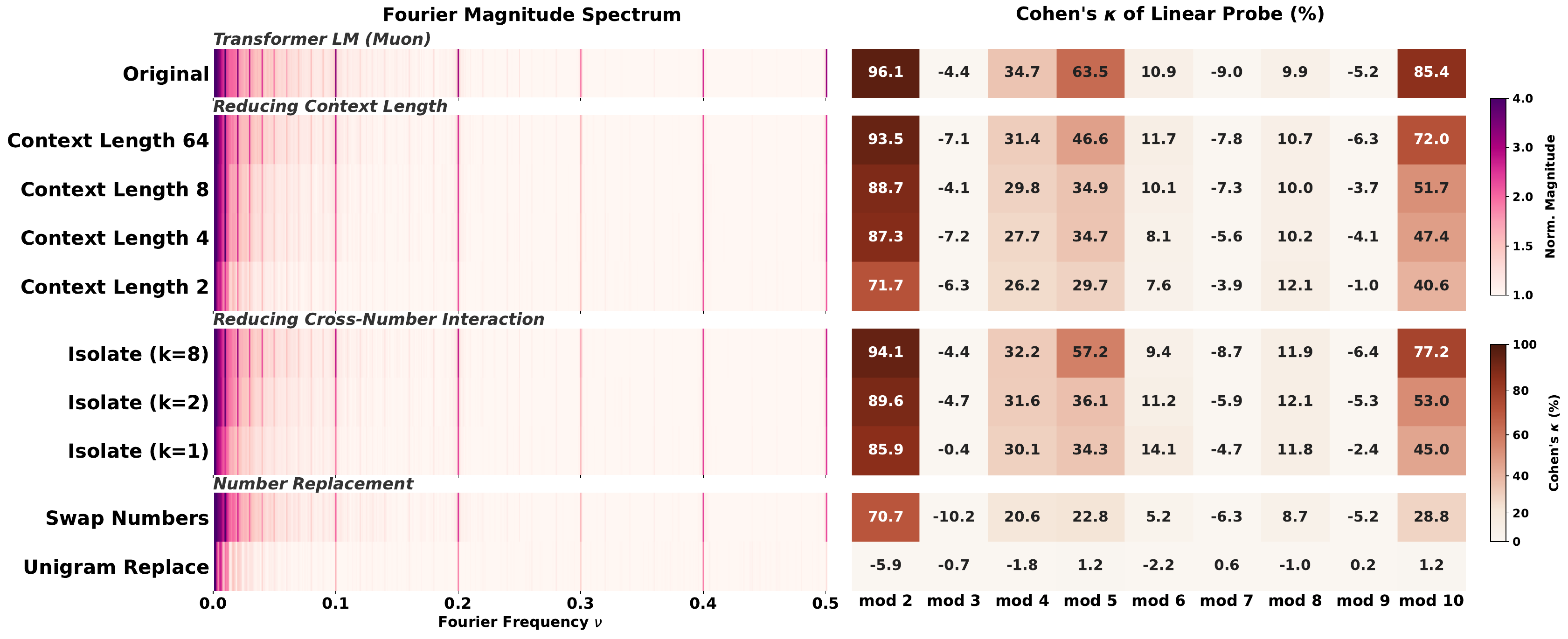}
    \caption{\textbf{Spectral convergence is universal but geometric convergence depends on the data signal.} 
    (\textit{Left}) Fourier spectra of Transformer embeddings trained under data perturbations in \Cref{tab:data_configs}. 
    All perturbations produce similar spikes to the original at periods $T = 2, 5, 10$, including \texttt{Unigram Replace}, which destroys all co-occurrence structure among number tokens. 
    (\textit{Right}) Cohen's $\kappa$ of linear probes for mod-$T$ classification tells a different story: \texttt{Original}, \texttt{Isolate-8}, and \texttt{Context Length 64} achieve strong probing, but with shorter context length or fewer numbers within each sequence (e.g. \texttt{Isolate-2}) is much weaker at $T = 5$ and $10$. \texttt{Swap Numbers} drops substantially, and \texttt{Unigram Replace} falls to chance.}
    \label{fig:data}
\end{figure}

\paragraph{Geometric convergence draws on several complementary data signals.}
The right panel of \Cref{fig:data} reveals that geometric convergence degrades gradually as different types of co-occurrence information are removed. The probing power is measured with Cohen's $\kappa$ for balanced classes, the metric that removes random guessing from accuracy for $T$-way classifications defined as $\kappa = (\mathrm{Accuracy} - \frac{1}{T}) / (1 - \frac{1}{T})$. When $\kappa = 0$, the probe is at chance and $\kappa = 100\%$ is perfect classification regardless of $T$.
\texttt{Swap Numbers}, which preserves number $n$-gram statistics but destroys the association between specific numbers and their text contexts, drops probing from $\kappa = 85.4$ to $28.8$ at $T = 10$, making text-number co-occurrence an important signal. But it is not the only one.

Longer context provides a second signal.
At \texttt{Context Length 2}, where each token sees only one neighbor, mod-10 probing already reaches $\kappa = 40.6$, well above \texttt{Swap Numbers} ($28.8$).
Increasing the window to $\ell = 4, 8,$ and $64$ steadily improves $\bmod 10$ probes ($\kappa = 47.4, 51.7,$ and $72.0$ respectively), showing that the model accumulates richer co-occurrence statistics from broader context.

Cross-number interaction provides an additional signal as well.
\texttt{Isolate}-$k$ directly controls cross-number interaction by restricting each packed sequence to contain at most $k$ number tokens. 
The extreme limit of $k=1$ isolates text-number co-occurrence completely, ensuring no two number tokens can interact within the same attention window.
Under this setting, it achieves $\kappa = 45.0$ at $T = 10$ and $85.9$ at $T = 2$.
Notably, even $k=1$ with a Transformer surpasses PPMI ($\kappa=27.1$) and word2vec ($\kappa=29.3$), suggesting that autoregressive language modeling with text-number co-occurrence alone extracts richer modular structure than classical embedding methods.
Allowing more numbers to co-occur within each attention block improves probing: $\kappa = 53.0$ at $k = 2$ and $77.2$ at $k = 8$ for $T = 10$.
The fact that \texttt{Isolate} ($k = 1$) still outperforms \texttt{Swap Numbers} confirms that text-number co-occurrence alone provides strong signal, but the gap to \texttt{Original} almost closed by \texttt{Isolate} ($k = 8$) shows that cross-number interaction contributes on top of it.
In all cases, Fourier spikes are fully preserved while probing degrades, reinforcing the two-tiered hierarchy between spectral and geometric convergence and they are driven by different mechanisms. These differences matter downstream: finetuned on 6- and 9-digit addition,
the \texttt{Original} checkpoint converges markedly faster than \texttt{Unigram Replace}
(\S\ref{app:downstream}).

\paragraph{Probing accuracy varies sharply across moduli.}
Across all conditions that achieve geometric convergence, the probing accuracy depends strongly on the modulus.
Mod 2, 5, and 10 are consistently the easiest ($\kappa = 96.1, 63.5, 85.4$ for \texttt{Original}), mod 4 achieves nontrivial probing ($\kappa = 34.9$), while moduli that share no common factor with 10 (e.g., 3, 7, 9) remain near chance.
This pattern is stable across all perturbations that preserve geometric convergence.
In \Cref{sec:addition}, we show that this structure can be traced to the tokenizer.

\subsection{Structural Attribution to Architecture and Optimizer} \label{ssec:arch_opt}

We now fix the pretraining data, and vary the architecture and optimizer. We train a 300M parameter Transformer model and two linear RNNs (Gated DeltaNet and Mamba-2 \citep{mamba2}) whose architectural and training details are shown in \S~\ref{app:model_training}. We vary two different optimizers for training LLMs: AdamW and Muon. Figure~\ref{fig:arch_opt} shows the results.

\begin{figure}[t]
    \centering
    \includegraphics[width=\linewidth]{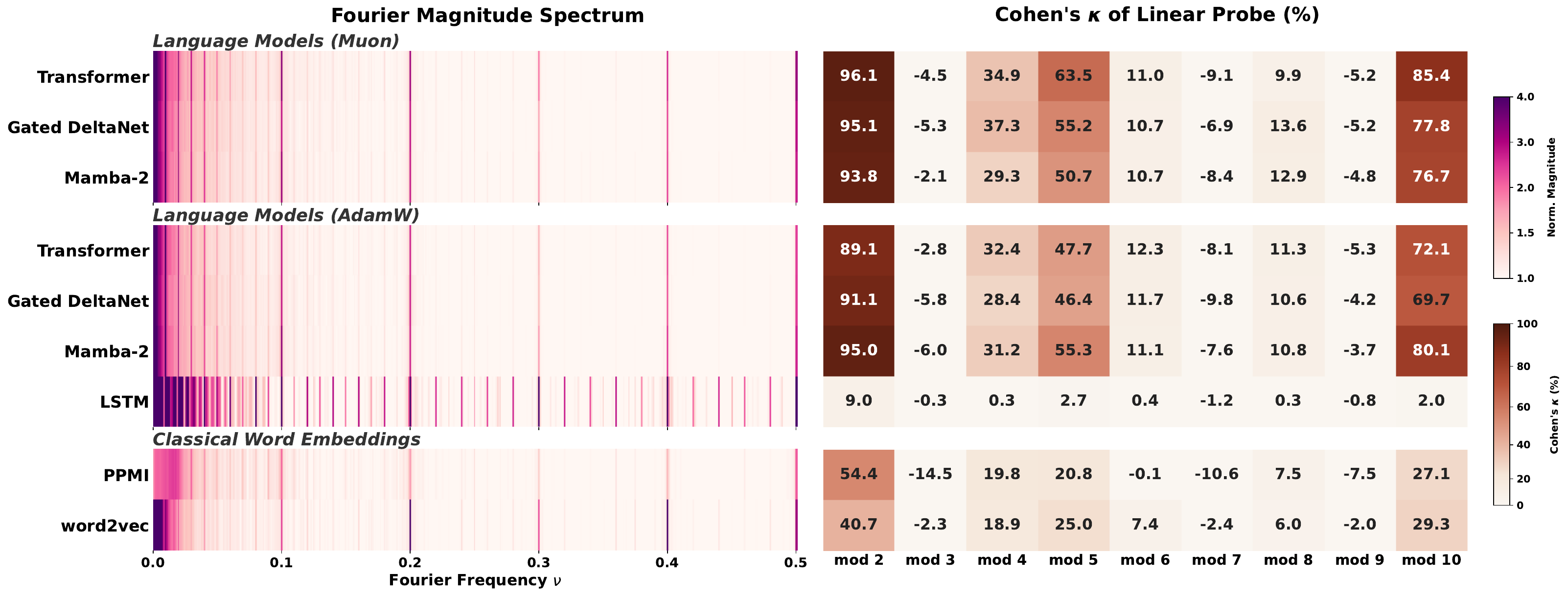}
    \caption{\textbf{Geometric convergence depends on architecture and optimizer.}
    (\textit{Left}) Fourier magnitude spectra of number embeddings across architectures and optimizers; all exhibit spikes at periods $T = 2, 5, 10$, including the \texttt{LSTM} and classical word embeddings.
    All three architectures produce similar Fourier spectra under both optimizers.
    (\textit{Right}) Mod-$T$ probe accuracy separates the models into tiers.
    With both Muon and AdamW, \texttt{Transformer}, \texttt{Gated DeltaNet}, and \texttt{Mamba-2} all achieve strong geometric convergence, while the \texttt{LSTM} remains near zero.
    Muon outperforms AdamW for \texttt{Transformer} and \texttt{Gated DeltaNet}, but \texttt{Mamba-2} with AdamW slightly outperforms its Muon counterpart.
    \texttt{PPMI} and \texttt{word2vec} fall in between.}
    \label{fig:arch_opt}
\end{figure}

\paragraph{Transformers and linear RNNs achieve geometric convergence; LSTMs do not.}
Under both Muon and AdamW, \texttt{Transformer}, \texttt{Gated DeltaNet}, and \texttt{Mamba-2} all achieve strong probing, with the Transformer performing best.
All three produce nearly identical Fourier spectra regardless of the optimizer, confirming that spectral convergence is architecture-independent and optimizer-independent in language pretraining task.
The 12-layer \texttt{LSTM}, trained with AdamW, develops even more prominent Fourier spikes but its probing accuracy remains near chance across all moduli. 
A 4-layer LSTM shows no improvement nor degradation, indicating that the failure is architectural rather than a matter of capacity (see \Cref{fig:lstm}).
We note that as shown in \Cref{fig:fourier_no_probe}, the Fourier spectrum of the LSTM embeddings closely resembles that of the number token marginal distribution, suggesting that the LSTM embeddings capture little beyond unigram frequency statistics for numbers. Surprisingly, a simple fix of adding residual connections could help LSTM learn geometric convergence (see \Cref{fig:lstm_add_residual}, \S~\ref{app:lstm}). 
Classical word embeddings fall in between: \texttt{PPMI} and \texttt{word2vec}, trained on the same 10B tokens, achieve moderate probing ($\kappa = 27.1$ and $29.3$ at $T = 10$) with clear Fourier spikes yet weaker probing, illustrating the dissociation between spectral and geometric convergence.

\paragraph{The effect of optimizer is architecture-dependent.}
Comparing the top and middle blocks of Figure~\ref{fig:arch_opt} isolates the effect of the optimizer.
Muon produces stronger probing for the Transformer ($\kappa = 85.4$ vs $72.1$ at $T = 10$) and for Gated DeltaNet ($77.8$ vs $69.7$), but Mamba-2 trained with AdamW actually outperforms Mamba-2 trained with Muon ($80.1$ vs $76.7$).
We find that the Transformer trained with Muon has the best probing 
performance. 
The optimizer's effect on geometric convergence thus depends on the architecture, and we do not observe a universal advantage for either optimizer in language pretraining task.


\paragraph{Spectral and geometric convergence co-emerge gradually.} \Cref{fig:dynamics_transformer_original} tracks $\Phi_T$ and probe accuracy throughout Transformer pretraining for  $T = 2, 5, 10$. Both increase smoothly with no phase transition, unlike the  grokking observed in modular arithmetic \citep{nanda2023progress}. We will discuss more on model behavior when trained directly on arithmetic in \Cref{sec:addition}.
\section{Convergent Evolution in Training on Arithmetic} \label{sec:addition}

Sections~\ref{sec:problem_setup} and~\ref{sec:pretraining}
studied models trained on general language, where Fourier features emerge from texts. 
We now ask whether convergent evolution also occurs when models are trained directly on arithmetic, where the prior given by human language is absent.

\paragraph{Experimental setup.}
We train 300M Transformers from random initialization (architecture of \S\ref{sec:problem_setup}) on examples $a+b=c$, masking the loss on prompt tokens up to the $=$ sign, for 3B tokens with two seeds each under Muon and AdamW. \emph{9-digit addition} samples operands of 1--9 digits (stratified by digit count), so operands span multiple number tokens.
\emph{3-digit addition} enumerates all pairs $a,b \in [0,999]$ with $a+b \leq 999$, so every operand and sum is a single token and 3B tokens amounts to roughly 1{,}000 epochs. Beyond mod-$T$ probes, we also train circular probes that project embeddings onto the unit circle (\Cref{fig:circular_probe}).


\begin{figure}[t]
    \centering
    \includegraphics[width=\linewidth]{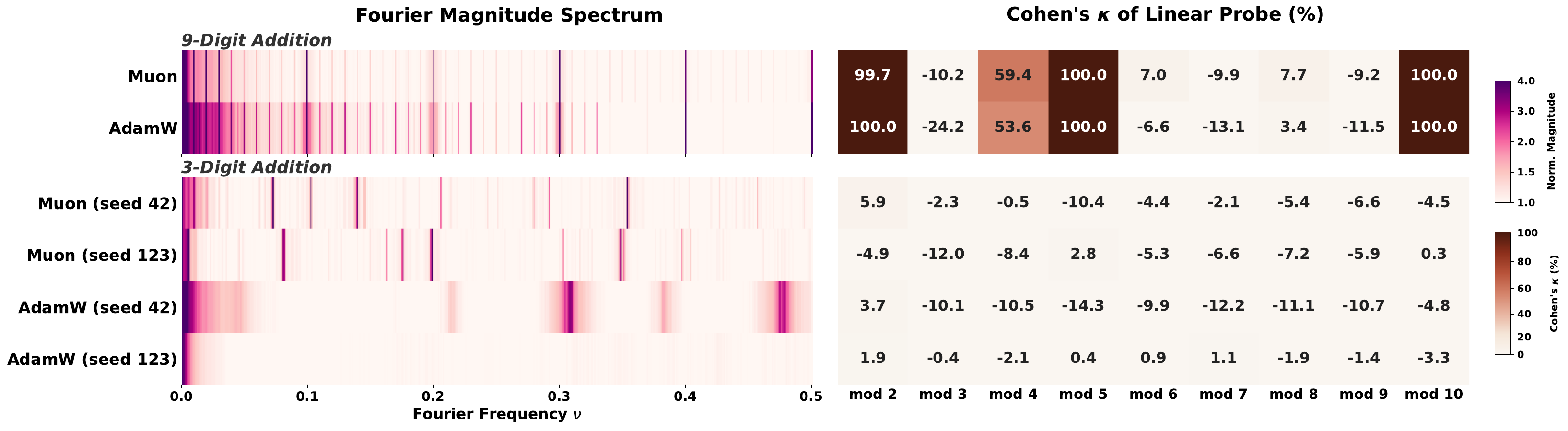}
    \caption{\textbf{Tokenization determines convergence in arithmetic.}
    (\textit{Top}) In 9-digit addition, both Muon and AdamW converge to the same spectral structure with sharp Fourier peaks and near-perfect $\kappa$ for mod 2, 5, and 10, showing both spectral and geometric convergence.
    (\textit{Bottom}) In 3-digit addition, where every operand and sum fits in a single token, the Fourier spectra vary across optimizers and random seeds, and $\kappa$ remains near chance for all moduli.
    Multi-token tokenization forces modular subproblems that produce convergent representations; single-token tokenization leaves the representation unconstrained.}
    \vspace{-1ex}
    \label{fig:addition}
\end{figure}

\paragraph{9-digit addition: convergence across optimizers.}
Both Muon and AdamW converge to the same spectral structure (\Cref{fig:addition}, top), with sharp Fourier peaks at the expected harmonics and near-perfect $\kappa$ for mod 2, 5, and 10.
The multi-token setting imposes constraints strong enough for both spectral and geometric convergence.

\paragraph{3-digit addition: no convergence without modular pressure.} The single-token setting behaves differently (\Cref{fig:addition}, bottom): with only 500K unique pairs, the learned representations vary across optimizer and random seed. Muon develops Fourier peaks at frequencies unaligned with modular periods, and $\kappa$ stays near chance. AdamW groks under one seed but never generalizes under the other (\Cref{fig:addition_dynamics}). Unlike mod-113 training \citep{nanda2023progress}, where modular structure is explicit, single-token addition imposes no modular constraint: the sequences $a+b=c$ are identical whether read mod-1000 or mod-1111, so convergent evolution does not occur and the representation is seed-dependent.

\paragraph{Why tokenization determines learned representation.}
In 9-digit addition, tokenization $[a_2, a_1, a_0] + [b_2, b_1, b_0] = [c_2, c_1, c_0]$ makes each output token satisfy $c_i = (a_i + b_i + \gamma_i) \bmod 1000$, with carry $\gamma_i \in \{0, 1\}$ and $\gamma_0 = 0$. Every output position is therefore a mod-1000 classification problem, especially the least significant position with no carry. With tied embeddings (\Cref{tab:config}), the output logits depend directly on the embedding matrix, pressuring it toward high $\Phi_{1000}$ and hence non-trivial $\Phi_T$ for all $T \mid 1000 = 2^3 \cdot 5^3$. Single-token addition imposes no such constraint, so $\Phi_T$ depends on the optimizer and seed. This is a second route through the two-tiered hierarchy: multi-token tokenization creates modular subproblems that force both spectral and geometric convergence, while single-token tokenization guarantees neither. See more ablations in \Cref{app:tokenizer}.


\paragraph{Non-decimal addition: the base selects the moduli.} Are the dominant periods 2, 5, 10 an artifact of base-10? We train on 6-digit addition in base-10, base-15, and base-16 (setup in \S\ref{app:nondecimal}). Base-10 training yields perfect probes at mod 2, 5, and 10, and base-15 at mod 3, 5, and 15 (\Cref{fig:non_decimal}), consistent with the Chinese Remainder Theorem: the tasks provide an inductive bias toward $\mathbb{Z}/10\mathbb{Z} \cong \mathbb{Z}/2\mathbb{Z} \times \mathbb{Z}/5\mathbb{Z}$ and $\mathbb{Z}/15\mathbb{Z} \cong \mathbb{Z}/3\mathbb{Z} \times \mathbb{Z}/5\mathbb{Z}$, respectively.
Base-16 lacks this structure: $16=2^4$ has no coprime factorization, so mod-16 addition cannot be split into independent smaller cyclic subproblems the way mod-10 splits into mod-2 and mod-5. Fourier peaks land at arbitrary frequencies and probes stay at chance. Under arithmetic pressure, convergent evolution thus inherits its periods from the numeral system.

\vspace{-1ex}
\section{Conclusion and Discussion} \label{sec:conclusion}

We have shown that periodic number representations in language models exhibit a two-tiered convergence: Fourier spikes are universal, but linearly separable mod-$T$ classes emerge only when data, architecture, and optimizer align.
The central lesson is that visible structure in representations does not guarantee functional organization: the LSTM and even the raw token distribution develop more prominent Fourier spikes than the Transformer yet achieve chance-level probing.

More generally, any representation-level diagnostic could mistake statistical artifacts of the training distribution for learned structure.
Our controlled perturbation approach offers a complementary lens to instance-level attribution methods such as influence functions \citep{koh2017influence}: rather than attributing predictions to individual training examples, we attribute learned representations to structural properties of the data distribution.

Analogous periodic representations have been found for days of the week and months of the year \citep{engels2025not, karkada2026symmetry}; whether the spectral-geometric dissociation extends to these and other cyclic concepts is a natural next step.
More broadly, the spectral-geometric hierarchy introduced here provides a concrete framework for distinguishing superficial from functional feature learning.
This distinction may prove important well beyond numerical representations as we increasingly rely on representation-level diagnostics to understand large language models.
\section*{Acknowledgments}
The authors acknowledge the Center for Advanced Research Computing (CARC) at the University of Southern California for providing computing resources that have contributed to the research results reported within this publication. We also acknowledge the use of the USC NLP cluster provided by the USC NLP Group. 
DF and RJ were also supported by a gift from the USC-Capital One Center for Responsible AI and Decision Making in Finance (CREDIF). 
RJ was supported in part by the National Science Foundation under Grant No. IIS-2403436. VS was supported by National Science Foundation award CCF-2239265, an Amazon Research Award, a Google Research Scholar Award and
an Okawa Foundation Research Grant.
The work was done in part while some of the authors were visiting the Simons Institute for the Theory of Computing.
This work used the Delta system at the National Center for Supercomputing Applications through allocation CIS250737 from the Advanced Cyberinfrastructure Coordination Ecosystem: Services \& Support (ACCESS) program, which is supported by National Science Foundation grants \#2138259, \#2138286, \#2138307, \#2137603, and \#2138296. 
This work was supported in part by the NVIDIA Academic Grant Program.
The GPU resources provided by NVIDIA were essential for training and analyzing the
300M-parameter models examined in this study.
Any opinions, findings, and conclusions or recommendations expressed in
this material are those of the author(s) and do not reflect the views of the funding agencies.


\bibliography{main}
\bibliographystyle{colm2026_conference}

\clearpage
\appendix
\section*{Appendix}
\appendix
\startcontents[appendix]
\printcontents[appendix]{l}{1}{\setcounter{tocdepth}{2}}

\clearpage
\section{\texorpdfstring{Proof of \Cref{thm:fourier}}{Proof of Theorem 1}}
\label{app:proof-fourier}
We first restate the theorem here.
\maintheorem*
We first start with two lemmas and then prove them in parts.

\subsection{Lemmas: Fourier-variance identity}

We first establish a lemma connecting the class means $\vmu_r$ to the Fourier coefficients $\vF_\nu$.

\begin{lemma}
\label{lem:class-mean-dft}
For each $\ell = 0, \ldots, T-1$, define
\[
\hat{\vmu}[\ell] \;=\; \frac{1}{\sqrt{T}} \sum_{r=0}^{T-1} \vmu_r \, e^{-2\pi i \ell r / T}.
\]
Then $\hat{\vmu}[\ell] = \sqrt{T/N} \cdot \vF_{\ell/T}$.
\end{lemma}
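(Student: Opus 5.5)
The plan is a direct computation: substitute the definition of $\vF_\nu$ at $\nu = \ell/T$ and regroup the sum over $n$ by residue class. First I note that $\ell/T$ is one of the allowed frequencies. Since $T \mid N$, we have $\ell/T = k/N$ with $k = \ell N/T \in \{0,\dots,N-1\}$. So $\vF_{\ell/T} = \frac{1}{\sqrt N}\sum_{n=0}^{N-1} \ve(n)\, e^{-2\pi i \ell n/T}$.

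The key observation is that the phase depends only on the residue of $n$ mod $T$. Writing $n = r + mT$ with $r = n \bmod T$, we get $e^{-2\pi i \ell n/T} = e^{-2\pi i \ell r/T}\, e^{-2\pi i \ell m} = e^{-2\pi i \ell r/T}$, because $\ell m$ is an integer. Splitting the sum over $n$ into the classes $C_r$ then gives
\[
\vF_{\ell/T} = \frac{1}{\sqrt N}\sum_{r=0}^{T-1} e^{-2\pi i \ell r/T}\sum_{n\in C_r}\ve(n) = \frac{1}{\sqrt N}\cdot\frac{N}{T}\sum_{r=0}^{T-1}\vmu_r\, e^{-2\pi i \ell r/T}.
\]
Here I use $|C_r| = N/T$ and the definition of $\vmu_r$.

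Finally, I compare with the definition $\hat{\vmu}[\ell] = \frac{1}{\sqrt T}\sum_r \vmu_r e^{-2\pi i \ell r/T}$. This gives $\vF_{\ell/T} = \frac{\sqrt N}{T}\sqrt T\,\hat{\vmu}[\ell] = \sqrt{N/T}\,\hat{\vmu}[\ell]$, which rearranges to $\hat{\vmu}[\ell] = \sqrt{T/N}\,\vF_{\ell/T}$.

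There is no real obstacle. The only points needing care are the hypothesis $T \mid N$, which makes $\ell/T$ an allowed frequency and the classes equal in size, and the integrality of $\ell m$ that removes the dependence on the block index.
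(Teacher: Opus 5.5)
Your proof is correct and is essentially the paper's argument. Both use $|C_r| = N/T$ and the fact that $e^{-2\pi i \ell n/T}$ depends only on $n \bmod T$ (with $T \mid N$ making $\ell/T$ an admissible frequency) to pass between the residue-class sum and the full DFT sum; you simply run the computation starting from $\vF_{\ell/T}$ rather than from $\hat{\vmu}[\ell]$.
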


\begin{proof}
Substituting $\vmu_r = \frac{1}{|C_r|}\sum_{n \in C_r} \ve(n) = \frac{T}{N}\sum_{n \in C_r} \ve(n)$
(since $|C_r| = N/T$ when $T | N$.) and pulling the constant into the outer sum:
\begin{align}
\hat{\vmu}[\ell]
&= \frac{\sqrt{T}}{N} \sum_{r=0}^{T-1} \sum_{n \in C_r} \ve(n) \, e^{-2\pi i \ell r / T}.
\end{align}
Every $n \in \{0,\ldots,N-1\}$ belongs to exactly one class $C_r$ with $r = n \bmod T$,
so we can write $n = mT + r$ for some non-negative integer $m$.
Then $e^{-2\pi i \ell r/T} = e^{-2\pi i \ell n/T}$, since the additional $m\ell$ full periods contribute
an integer multiple of $2\pi$ to the exponent.
Re-indexing the double sum as a single sum over $n$:
\begin{align}
\hat{\vmu}[\ell]
&= \frac{\sqrt{T}}{N} \sum_{n=0}^{N-1} \ve(n) \, e^{-2\pi i \ell n / T}.
\end{align}
Because $T \mid N$, the ratio $\ell/T$ is one of the $N$ Fourier frequencies,
so comparing with $\vF_{\ell/T} = \frac{1}{\sqrt{N}} \sum_{n} \ve(n) \, e^{-2\pi i (\ell/T) n}$ gives
\[
\hat{\vmu}[\ell]
= \frac{\sqrt{T}}{N} \cdot \sqrt{N} \cdot \vF_{\ell/T}
= \sqrt{\frac{T}{N}} \cdot \vF_{\ell/T}. \qedhere
\]
\end{proof}

Next, we state our second lemma on the identities for $\Tr(\mS_B)$ and $\Tr(\mS_W)$.

\begin{lemma}\label{lem:fourier-variance-identity}
    Let the between-class and within-class variances $\mS_B$ and $\mS_W$ be defined as in \Cref{thm:fourier}, they satisfy
\[
\Tr(\mS_B) = \frac{\Phi_T}{N}, \qquad \Tr(\mS_W) = \frac{1}{N}\sum_{\nu \notin H_T}\|\vF_\nu\|^2.
\]
\end{lemma}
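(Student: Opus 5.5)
The plan is to derive both identities from Parseval's theorem applied twice: once to the length-$T$ sequence of class means, via \Cref{lem:class-mean-dft}, and once to the full length-$N$ sequence of embeddings. The within-class identity then follows from the standard decomposition of total scatter into between-class and within-class scatter. Throughout, $\|\cdot\|$ on $\sC^d$ is the Hermitian norm, so that $\|\vF_\nu\|^2=\sum_j |F^{(j)}_\nu|^2$. Both DFTs are unitary (normalized by $1/\sqrt{T}$ and $1/\sqrt{N}$), so Parseval holds coordinatewise with constant $1$.

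\textbf{Between-class trace.} Expanding the trace and using $\vmu=\frac1T\sum_r\vmu_r$ (valid because the classes have equal size $N/T$) gives
\[
\Tr(\mS_B)=\frac1T\sum_{r}\|\vmu_r-\vmu\|^2=\frac1T\Big(\sum_r\|\vmu_r\|^2 - T\|\vmu\|^2\Big).
\]
Parseval for the length-$T$ unitary DFT gives $\sum_r\|\vmu_r\|^2=\sum_{\ell=0}^{T-1}\|\hat\vmu[\ell]\|^2$. \Cref{lem:class-mean-dft} then turns this into $\frac{T}{N}\sum_{\ell=0}^{T-1}\|\vF_{\ell/T}\|^2$. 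The $\ell=0$ term is $\hat\vmu[0]=\sqrt T\,\vmu$, so $\|\hat\vmu[0]\|^2=T\|\vmu\|^2$, and it cancels the subtracted mean term exactly. What remains is $\frac1T\cdot\frac TN\sum_{\ell=1}^{T-1}\|\vF_{\ell/T}\|^2=\Phi_T/N$.

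\textbf{Within-class trace.} Define the total scatter $\frac1N\sum_n\|\ve(n)-\vmu\|^2$. Expand $\ve(n)-\vmu=(\ve(n)-\vmu_r)+(\vmu_r-\vmu)$ for $n\in C_r$. The cross terms vanish because $\sum_{n\in C_r}(\ve(n)-\vmu_r)=0$. Since $|C_r|/N=1/T$, the class-mean part is exactly $\Tr(\mS_B)$, so the total scatter equals $\Tr(\mS_B)+\Tr(\mS_W)$. Separately, the total scatter equals $\frac1N\big(\sum_n\|\ve(n)\|^2-N\|\vmu\|^2\big)$. By length-$N$ Parseval and $\vF_0=\sqrt N\,\vmu$, this is $\frac1N\big(\sum_\nu\|\vF_\nu\|^2-\|\vF_0\|^2\big)$. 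Subtracting $\Phi_T/N$ removes the remaining harmonics $\nu\in H_T\setminus\{0\}$, which leaves $\Tr(\mS_W)=\frac1N\sum_{\nu\notin H_T}\|\vF_\nu\|^2$.

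\textbf{Main obstacle.} There is no real obstacle; the risk is bookkeeping. The point needing care is the zero frequency: it must be removed in both computations through the grand mean, and it is excluded from $\Phi_T$ but belongs to $H_T$. A second point is that $T\mid N$ is needed in two places. It makes the class sizes equal, so that $\vmu$ is the unweighted average of the $\vmu_r$ and the weights $1/T$ and $|C_r|/N$ agree. It also places $\ell/T$ on the length-$N$ frequency grid, which \Cref{lem:class-mean-dft} already uses.
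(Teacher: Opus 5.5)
Your proof is correct and follows the paper's argument essentially step for step. For $\Tr(\mS_B)=\Phi_T/N$ you use length-$T$ Parseval on the class means together with \Cref{lem:class-mean-dft}, with the $\ell=0$ term cancelling the grand-mean term. For $\Tr(\mS_W)$ you combine length-$N$ Parseval with the total-scatter decomposition. Your version is slightly more careful than the paper's on two points it glosses: you justify $V=\Tr(\mS_B)+\Tr(\mS_W)$ by showing the cross terms vanish, and you keep the factor $T$ on $\|\vmu\|^2$ when expanding $\Tr(\mS_B)$.
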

\begin{proof}
Define the $T \times d$ matrix $\mM$ whose rows are the class means:
\[
\mM \;=\;
\begin{pmatrix}
\text{---}\; \vmu_0^\top \;\text{---} \\
\text{---}\; \vmu_1^\top \;\text{---} \\
\vdots \\
\text{---}\; \vmu_{T-1}^\top \;\text{---}
\end{pmatrix}
\in \mathbb{R}^{T \times d},
\]
and let $\mU$ be the $T \times T$ matrix with entries
$U_{\ell r} = \frac{1}{\sqrt{T}} e^{-2\pi i \ell r/T}$.
Define $\hat{\mM} = \mU \mM \in \mathbb{C}^{T \times d}$, whose rows are $\hat{\vmu}[\ell]^\top$:
\[
\hat{\mM} \;=\;
\begin{pmatrix}
\text{---}\; \hat{\vmu}[0]^\top \;\text{---} \\
\text{---}\; \hat{\vmu}[1]^\top \;\text{---} \\
\vdots \\
\text{---}\; \hat{\vmu}[T-1]^\top \;\text{---}
\end{pmatrix}
\in \mathbb{C}^{T \times d}.
\]

Since $\mU$ is the DFT matrix in $T$ dimensions, $\mU$ is unitary. 

Therefore,
\[
\sum_{\ell=0}^{T-1} \|\hat{\vmu}[\ell]\|^2
\;=\; \|\hat{\mM}\|_F^2
\;=\; \|\mU \mM\|_F^2
\;=\; \|\mM\|_F^2
\;=\; \sum_{r=0}^{T-1} \|\vmu_r\|^2.
\]

This is simply the fact that a unitary change of basis preserves the sum of squared norms.

For $\ell = 0$: $\hat{\vmu}[0] = \sqrt{T/N} \cdot \vF_0$.
Since $\vF_0 = \frac{1}{\sqrt{N}}\sum_n \ve(n) = \sqrt{N}\,\vmu$, we have
$\hat{\vmu}[0] = \sqrt{T}\,\vmu$, and therefore
$\frac{1}{T}\|\hat{\vmu}[0]\|^2 = \|\vmu\|^2$.

The between-class variance is:
\begin{align}
\Tr(\mS_B)
&= \frac{1}{T}\sum_r \|\vmu_r - \vmu\|^2
= \frac{1}{T} \left(\sum_r \|\vmu_r\|^2
  - 2\vmu^\top\!\left(\frac{1}{T}\sum_r \vmu_r\right) + \|\vmu\|^2\right) \\
&= \frac{1}{T}\left(\sum_r \|\vmu_r\|^2 - \|\vmu\|^2\right)
= \frac{1}{T}\left(\sum_{\ell=1}^{T-1}\|\hat{\vmu}[\ell]\|^2\right) \\
&= \frac{1}{T}\left(\sum_{\ell=1}^{T-1}\frac{T}{N}\|\vF_{\ell/T}\|^2\right)
= \frac{1}{N}\sum_{\ell=1}^{T-1}\|\vF_{\ell/T}\|^2
= \frac{\Phi_T}{N},
\end{align}
where the second line uses $\frac{1}{T}\sum_r \vmu_r = \vmu$,
which holds because the classes $\{C_r\}$ partition $\{0,\ldots,N-1\}$
into equal-sized groups.

For $\Tr(\mS_W)$, the total variance decomposes as $V = \Tr(\mS_B) + \Tr(\mS_W)$.
Define the $N \times d$ matrix $\mE$ whose rows are $\ve(0)^\top, \ldots, \ve(N-1)^\top$, and let $\mU_N$ be the $N \times N$ DFT matrix with entries $(\mU_N)_{kn} = \frac{1}{\sqrt{N}} e^{-2\pi i k n / N}$.
Then $\hat{\mE} = \mU_N \mE$ has rows $\vF_\nu^\top$, and since $\mU_N$ is unitary,
\[
\sum_{n=0}^{N-1} \|\ve(n)\|^2 = \|\mE\|_F^2 = \|\hat{\mE}\|_F^2 = \sum_{\nu} \|\vF_\nu\|^2.
\]
Since $\vF_0 = \sqrt{N}\,\vmu$, we have $\|\vF_0\|^2 = N\|\vmu\|^2$, and therefore
\[
V = \frac{1}{N}\sum_{n=0}^{N-1}\|\ve(n) - \vmu\|^2
  = \frac{1}{N}\left(\sum_{n}\|\ve(n)\|^2 - N\|\vmu\|^2\right)
  = \frac{1}{N}\sum_{\nu \neq 0}\|\vF_\nu\|^2.
\]

Therefore:
\[
\Tr(\mS_W) = V - \Tr(\mS_B)
= \frac{1}{N}\sum_{\nu \neq 0}\|\vF_\nu\|^2
  - \frac{1}{N}\sum_{\ell=1}^{T-1}\|\vF_{\ell/T}\|^2
= \frac{1}{N}\sum_{\nu \notin H_T}\|\vF_\nu\|^2,
\]
where $H_T = \{0,\, 1/T,\, 2/T,\, \ldots,\, (T{-}1)/T\}$ collects the
zero-frequency and harmonic frequencies.
\end{proof}

\subsection{Necessary condition}

\begin{proof}[Proof of Part (i) of \Cref{thm:fourier}]
If $\Phi_T = 0$, then $\Tr(\mS_B) = 0$ by \Cref{lem:fourier-variance-identity}.
Since $\mS_B$ is positive semidefinite, $\Tr(\mS_B) = 0$ implies $\mS_B = \bm{0}$, which requires $\vmu_r = \vmu$ for all $r = 0, \ldots, T-1$.
When all class means coincide, the class-conditional distributions of $\ve(n)$ share the same first moment, so no linear probe (or any probe relying on mean separation) can distinguish the $T$ classes above the chance rate $1/T$.
\end{proof}

\subsection{Insufficiency condition} \label{app:insufficient}


\begin{proof}[Proof of Part (ii)]
We construct embeddings whose residue classes interleave periodically on the real line, then show that the geometry of linear decision boundaries prevents any $T$-class linear classifier from exceeding chance-level accuracy by more than $\varepsilon$.
Fix $T \ge 2$, $C > 0$, and $\varepsilon > 0$.
Set $K = \lceil (T{-}1)/(T\varepsilon) \rceil$ and $N = KT$.
Every index $n \in \{0, \dots, N{-}1\}$ has a unique decomposition $n = mT + r$ with residue $r \in \{0, \dots, T{-}1\}$ and block index $m \in \{0, \dots, K{-}1\}$.
Define
\[
  e(n) \;=\; A \cdot \underbrace{(n \bmod T)}_{\text{residue } r}
  \;+\; B \cdot \underbrace{\left\lfloor \frac{n}{T} \right\rfloor}_{\text{block index } m},
\]
with $A, B > 0$ to be chosen.
Intuitively, $A$ controls $n \bmod T$: enlarging $A$ pulls numbers sharing the same residue class $C_r$ together.
The parameter $B$ controls $\lfloor n/T \rfloor$, which measures how many full copies of $T$ fit below $n$: increasing $B$ clusters numbers of similar magnitude together, regardless of their residue.
We now show how $A$ determines the Fourier power while $B$ controls the interleaving that defeats linear classifiers.

\paragraph{Fourier power.}
Within class $C_r = \{r, r{+}T, \dots, r{+}(K{-}1)T\}$, the term $Ar$ is constant and only the block term varies, so the class mean is
\[
  \mu_r
  = \frac{1}{K}\sum_{m=0}^{K-1}(Ar + Bm)
  = Ar + \frac{B(K{-}1)}{2}.
\]
The grand mean is $\mu = A(T{-}1)/2 + B(K{-}1)/2$, giving $\mu_r - \mu = A\bigl(r - (T{-}1)/2\bigr)$.
Note that the block term $B \lfloor n/T \rfloor$ contributes nothing to $S_B$: its class mean $B(K{-}1)/2$ is identical across all classes and cancels in $\mu_r - \mu$.
Therefore
\[
  S_B
  = \frac{1}{T}\sum_{r=0}^{T-1}(\mu_r - \mu)^2
  = \frac{A^2}{T}\sum_{r=0}^{T-1}\Bigl(r - \frac{T{-}1}{2}\Bigr)^{\!2}
  = \frac{A^2(T^2 - 1)}{12},
\]
where the last equality uses the standard identity for the second central moment of $T$ consecutive integers.
By \Cref{lem:fourier-variance-identity}, $\Phi_T = N \cdot S_B = A^2 KT(T^2-1)/12$.
Setting $A = \sqrt{12C / (KT(T^2 - 1))}$ gives $\Phi_T = C$.

\paragraph{Periodic interleaving.}
Choose $B > (T{-}1)A$ so that consecutive blocks separate on the real line.
The largest value in block $m$ is $e((T{-}1) + mT) = (T{-}1)A + Bm$, and
the smallest value in block $m{+}1$ is $e(0 + (m{+}1)T) = B(m{+}1)$.
Since $B > (T{-}1)A$, we have $B(m{+}1) > (T{-}1)A + Bm$, so block $m$
and block $m{+}1$ occupy disjoint intervals on the real line.
Within each block, the $T$ points are sorted by residue:
$e(mT) = Bm < e(mT{+}1) = A + Bm < \cdots < e(mT{+}T{-}1) = (T{-}1)A + Bm$,
since $e(n)$ is increasing in $n \bmod T$ for fixed $\lfloor n/T \rfloor$.
Combining both observations, sorting all $N$ embeddings by value yields the
natural ordering: the $i$-th smallest embedding ($0$-indexed) is
\[
  x_{(i)} \;=\; e(i) \;=\; A \cdot (i \bmod T) \;+\; B \cdot \lfloor i/T \rfloor,
  \qquad i = 0, 1, \dots, N{-}1,
\]
whose class label is $i \bmod T$.
The sorted class label sequence is therefore $(0, 1, \dots, T{-}1)$ repeated
$K$ times:
\[
  \underbrace{0, 1, \dots, T{-}1}_{\text{block } 0},\;\;
  \underbrace{0, 1, \dots, T{-}1}_{\text{block } 1},\;\;
  \dots,\;\;
  \underbrace{0, 1, \dots, T{-}1}_{\text{block } K{-}1}.
\]
Any contiguous subsequence of length $T$ or longer contains at least one
complete cycle and therefore includes at least one point from every class.

\paragraph{Classification bound.}
A $T$-class linear classifier assigns each instance $x \in \mathbb{R}$ to
$\operatorname{argmax}_{c \in \{0,\dots,T-1\}} (w_c x + b_c)$ for
parameters $w_c, b_c \in \mathbb{R}$. This is the hypothesis class of
multiclass logistic regression.
Note that this hypothesis class is expressive enough to perfectly classify
$T$ contiguous groups: when $B$ is small, the $T$ classes cluster near
$0, A, 2A, \dots, (T{-}1)A$, and setting $w_c = c$ with biases cascaded
via $b_0 = 0$, $b_{c+1} = b_c - (cA + A/2)$ places the $T{-}1$ decision
boundaries between consecutive clusters, achieving $100\%$ accuracy.
Next we exploit the limitation that linear classifiers partition $\mathbb{R}$ into at most $T$ contiguous intervals: any two
of the $T$ lines $x \mapsto w_c x + b_c$ intersect in at most one point,
yielding at most $T - 1$ breakpoints and hence at most $T$ intervals, each
assigned to a single class.
Now consider any such interval.
Each complete cycle $(0, 1, \dots, T{-}1)$ contained in the interval contributes exactly one point from every class, so the assigned label matches exactly a $1/T$ fraction of those points.
Only at interval boundaries can a partial cycle contribute additional correct predictions: at most one per boundary, for a total of at most $T - 1$ extra correct points across all $T - 1$ boundaries.
It follows that
\[
  \mathrm{accuracy}
  \;\le\; \frac{N/T + T - 1}{N}
  \;=\; \frac{1}{T} + \frac{T-1}{KT}.
\]
The choice $K = \lceil (T{-}1)/(T\varepsilon) \rceil$ guarantees $(T{-}1)/(KT) \le \varepsilon$, completing the proof.

\Cref{fig:thm_part_ii_examples,fig:part_ii_T_10} illustrates this construction for $T = 5,N = 25$, and for $T = 10, N = 1000$, where in both cases $\varepsilon$ achieves its minimum at $\frac{T-1}{N}$, at 16\% and 0.9\% respectively.
\end{proof}

\begin{figure}
    \centering
    \includegraphics[width=\linewidth]{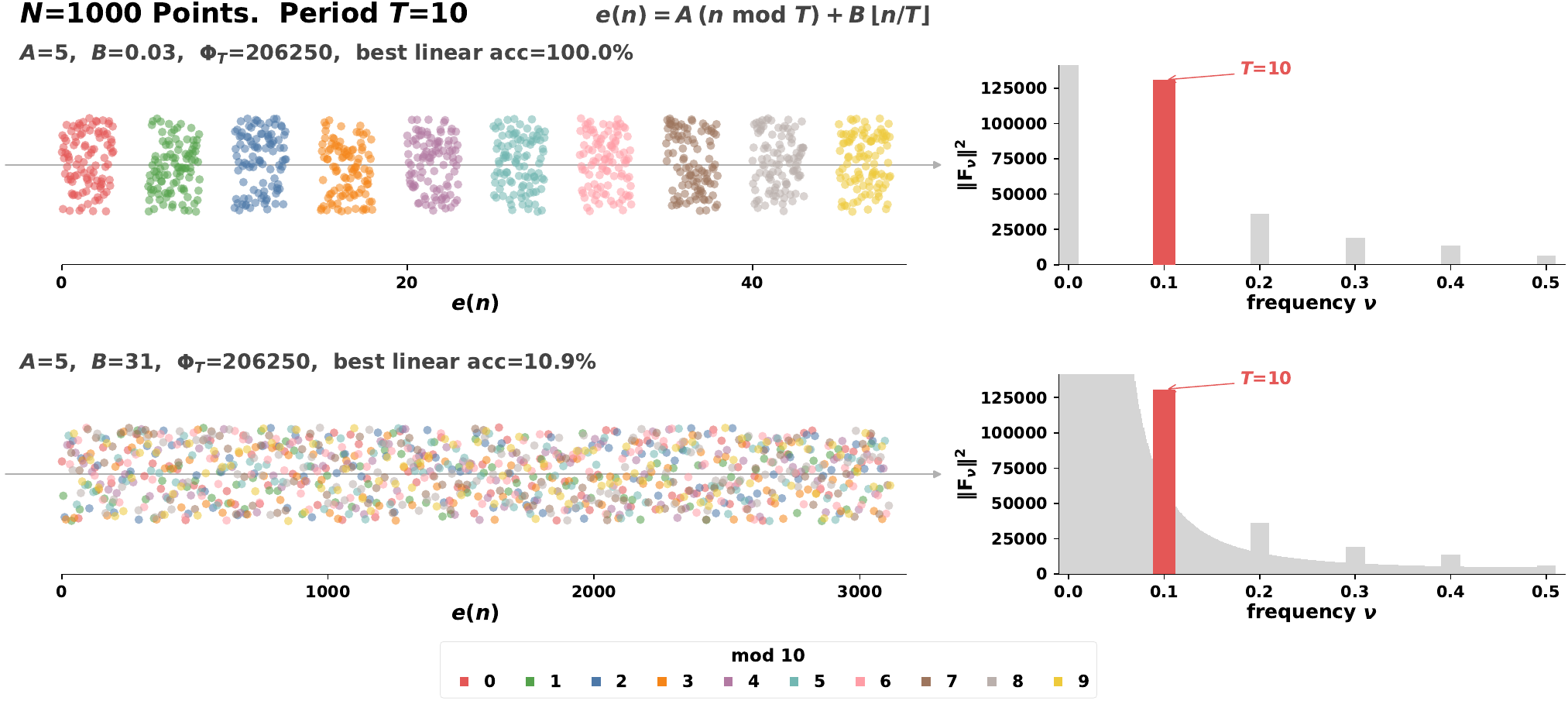}
    \caption{Emergence of Fourier structure in a constructed embedding $e(n) = A(n \bmod T) + B\lfloor n/T \rfloor$ with $T=10$, $A=5$, $N=1000$. Each dot is a number $n$ placed at its scalar embedding value on the horizontal axis and colored by its class $n \bmod T$; since $e(n)$ is one-dimensional, the vertical coordinate carries no information and is jittered purely for visibility so that overlapping points of different classes remain distinguishable. \textbf{Top ($B=0.03$)}: within-block drift is small relative to between-class spacing, so the sorted line separates into 10 clean color bands: a linear classifier on $e(n)$ recovers the mod-10 residue at near-perfect accuracy, and the DFT concentrates energy at the fundamental frequency $\nu = 1/T = 0.1$. \textbf{Bottom ($B=31$)}: the between-block drift $B$ dominates, interleaving classes along the line so that every mod-10 lane spans the full range; best linear accuracy collapses to $\approx 1/T + \varepsilon$ with $\varepsilon = 0.9$\%. The Fourier spectrum keeps the same peak at $\nu=0.1$ (same $\Phi_T$), showing that periodic energy at the fundamental is a property of the construction itself, independent of whether class identity is linearly decodable.}
    \label{fig:part_ii_T_10}
\end{figure}

The token frequency distribution in \Cref{fig:fourier_no_probe} provides an empirical counterexample: LSTM exhibits clear Fourier spikes at $T = 2, 5, 10$ yet achieves chance-level probing for all moduli, showing that the construction above captures a phenomenon that occurs in practice.

\subsection{Lower and Upper Bounds} \label{app:bounds}
\begin{lemma}
    Assume $d \geq T - 1$ and $\mS_W$ is invertible (which holds when $d \leq N - T$, or after projecting to a subspace of dimension at most $N - T$).
In Fisher's Linear Discriminant Analysis \citep{fisher1936lda}, the optimal linear probe maximizes the ratio of between-class to within-class variance along a projection direction.
The separability of this optimal discriminant is characterized by $\lambda_{\max}(\mS_W^{-1} \mS_B)$, the largest generalized eigenvalue of the scatter matrix pair, which satisfies
\[
\frac{1}{(T-1) \cdot \mathrm{cond}(\mS_W) } \cdot \frac{\Phi_T}{N \cdot \lambda_{\min}(\mS_W)} \;\leq\; \lambda_{\max}(\mS_W^{-1} \mS_B) \;\leq\; \frac{\Phi_T}{N \cdot \lambda_{\min}(\mS_W)}.
\]
The ratio between the upper and lower bounds is $(T-1) \cdot \mathrm{cond}(\mS_W)$, where $\mathrm{cond}(\mS_W) = \lambda_{\max}(\mS_W) / \lambda_{\min}(\mS_W)$.
\end{lemma}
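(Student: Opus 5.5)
The plan is to work entirely through the generalized Rayleigh quotient
\[
R(v) \;=\; \frac{v^\top \mS_B v}{v^\top \mS_W v}, \qquad v \neq \vzero .
\]
Since $\mS_W$ is symmetric positive definite, $\mS_W^{-1}\mS_B$ is similar to $\mS_W^{-1/2}\mS_B\mS_W^{-1/2}$. Its eigenvalues are therefore real and nonnegative, and $\lambda_{\max}(\mS_W^{-1}\mS_B) = \max_{v\neq \vzero} R(v)$. Both bounds then follow by estimating numerator and denominator separately. The only input from earlier is \Cref{lem:fourier-variance-identity}, namely $\Tr(\mS_B) = \Phi_T/N$.

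\emph{Upper bound.} For any $v$ we have $v^\top \mS_W v \ge \lambda_{\min}(\mS_W)\|v\|^2$ and $v^\top \mS_B v \le \lambda_{\max}(\mS_B)\|v\|^2$. Hence
\[
R(v) \le \frac{\lambda_{\max}(\mS_B)}{\lambda_{\min}(\mS_W)}.
\]
Because $\mS_B \succeq 0$, we have $\lambda_{\max}(\mS_B) \le \Tr(\mS_B) = \Phi_T/N$. Combining these gives $\lambda_{\max}(\mS_W^{-1}\mS_B) \le \Phi_T/(N\lambda_{\min}(\mS_W))$.

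\emph{Lower bound.} Take $v$ to be a top eigenvector of $\mS_B$. Then
\[
R(v) \ge \frac{\lambda_{\max}(\mS_B)}{\lambda_{\max}(\mS_W)}.
\]
The key step is to lower-bound $\lambda_{\max}(\mS_B)$ by $\Tr(\mS_B)/\mathrm{rank}(\mS_B)$ and then to show that $\mathrm{rank}(\mS_B) \le T-1$. For the rank claim, $\mS_B$ is a sum of the $T$ rank-one terms $(\vmu_r-\vmu)(\vmu_r-\vmu)^\top$. The vectors $\vmu_r - \vmu$ satisfy the linear relation $\sum_r(\vmu_r-\vmu)=\vzero$, which holds because the classes have equal size $N/T$, exactly as used in the proof of \Cref{lem:fourier-variance-identity}. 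So they span a space of dimension at most $T-1$. This yields
\[
\lambda_{\max}(\mS_B) \ge \frac{\Phi_T}{N(T-1)}.
\]
Finally, write $\lambda_{\max}(\mS_W) = \mathrm{cond}(\mS_W)\,\lambda_{\min}(\mS_W)$. Substituting gives exactly the stated lower bound. Dividing the upper bound by the lower bound gives the ratio $(T-1)\,\mathrm{cond}(\mS_W)$.

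No step is really an obstacle. The points needing care are two. First, the equality $\lambda_{\max}(\mS_W^{-1}\mS_B) = \max_v R(v)$ needs the symmetrization argument, since $\mS_W^{-1}\mS_B$ is not itself symmetric. Second, the rank bound $\mathrm{rank}(\mS_B)\le T-1$ is what produces the factor $T-1$. The hypothesis $d \ge T-1$ is not needed for the inequalities, only for the interpretation.
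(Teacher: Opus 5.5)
Your proposal is correct and matches the paper's proof step for step: the Rayleigh-quotient characterization, the upper bound via $\lambda_{\max}(\mS_B)\le\Tr(\mS_B)=\Phi_T/N$ over $\lambda_{\min}(\mS_W)$, and the lower bound from evaluating at the top eigenvector of $\mS_B$, using $\mathrm{rank}(\mS_B)\le T-1$ (a consequence of $\sum_r(\vmu_r-\vmu)=\vzero$) and $\lambda_{\max}(\mS_W)=\mathrm{cond}(\mS_W)\,\lambda_{\min}(\mS_W)$. Two of your additions are correct refinements that the paper omits: the symmetrization argument justifying $\lambda_{\max}(\mS_W^{-1}\mS_B)=\max_{v\neq\vzero}R(v)$, and the observation that the hypothesis $d\ge T-1$ is not needed for the inequalities.
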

\begin{proof}
The optimal linear discriminant for $T$-class classification projects the data onto the direction maximizing the generalized Rayleigh quotient:
\[
\lambda_{\max}(\mS_W^{-1}\mS_B) = \max_{\vv \neq \vzero} \frac{\vv^\top \mS_B\, \vv}{\vv^\top \mS_W\, \vv}.
\]

\emph{Upper bound.}
For any $\vv \neq \vzero$, the numerator satisfies $\vv^\top \mS_B\, \vv \leq \lambda_{\max}(\mS_B)\|\vv\|^2 \leq \Tr(\mS_B)\|\vv\|^2$, where the second inequality holds because $\mS_B$ is positive semidefinite and $\lambda_{\max}(\mS_B) \leq \Tr(\mS_B)$.
The denominator satisfies $\vv^\top \mS_W\, \vv \geq \lambda_{\min}(\mS_W)\|\vv\|^2$.
Therefore:
\[
\lambda_{\max}(\mS_W^{-1}\mS_B) \leq \frac{\Tr(\mS_B)}{\lambda_{\min}(\mS_W)} = \frac{\Phi_T}{N \cdot \lambda_{\min}(\mS_W)}.
\]

\emph{Lower bound.}
The matrix $\mS_B$ has rank at most $\min(d, T-1)$; since $d \geq T - 1$ by assumption, this simplifies to $T - 1$.
This holds because the $T$ vectors $\{\vmu_r - \vmu\}_{r=0}^{T-1}$ satisfy $\sum_r (\vmu_r - \vmu) = \vzero$ and therefore span a subspace of dimension at most $T-1$.
It follows that $\lambda_{\max}(\mS_B) \geq \Tr(\mS_B)/(T-1) = \Phi_T/(N(T-1))$.
Let $\vv^*$ be the unit eigenvector of $\mS_B$ corresponding to $\lambda_{\max}(\mS_B)$.
Then:
\[
\lambda_{\max}(\mS_W^{-1}\mS_B) \geq \frac{{\vv^*}^\top \mS_B\, \vv^*}{{\vv^*}^\top \mS_W\, \vv^*} = \frac{\lambda_{\max}(\mS_B)}{{\vv^*}^\top \mS_W\, \vv^*} \geq \frac{\lambda_{\max}(\mS_B)}{\lambda_{\max}(\mS_W)} \geq \frac{\Phi_T}{N \cdot (T-1) \cdot \lambda_{\max}(\mS_W)}.
\]

\emph{Gap between the bounds.}
The ratio of the upper to lower bound is:
\[
\frac{\Phi_T / (N \cdot \lambda_{\min}(\mS_W))}{\Phi_T / (N \cdot (T-1) \cdot \lambda_{\max}(\mS_W))} = (T-1) \cdot \frac{\lambda_{\max}(\mS_W)}{\lambda_{\min}(\mS_W)} = (T-1) \cdot \mathrm{cond}(\mS_W).
\]
The Fourier power spectrum fully determines $\Phi_T$ via Part~(i), but $\mathrm{cond}(\mS_W)$ depends on the directional structure of within-class variation, which the power spectrum $\{\|\vF_\nu\|^2\}$ does not capture.
Specifically, $\|\vF_\nu\|^2$ aggregates power across all $d$ embedding dimensions at frequency $\nu$, discarding any information about which dimensions carry the periodic signal versus which carry within-class noise.
As a result, two embeddings with identical power spectra (and hence identical $\Phi_T$) but different within-class covariance structures can yield $\lambda_{\max}(\mS_W^{-1}\mS_B)$ values differing by up to a factor of $(T-1) \cdot \mathrm{cond}(\mS_W)$, producing vastly different probe accuracies.
\end{proof}
\clearpage
\section{Experiments} \label{app:exp}

\subsection{Model and Training Details} \label{app:model_training}
\begin{table}[h]
\centering
\caption{Model architectures and training configurations.}
\label{tab:config}
\small
\setlength{\tabcolsep}{4pt}
\begin{tabular}{l*{4}{c}}
\toprule
& Transformer & Gated DeltaNet & Mamba-2 & LSTM \\
\midrule
\multicolumn{5}{l}{\textit{Architecture}} \\
\quad Total parameters          & 320M   & 318M   & 316M   & 232M \\
\quad \; $\rightarrow$ Embedding              & 131M   & 131M   & 131M   & 131M \\
\quad \; $\rightarrow$ Non-embedding          & 189M   & 186M   & 185M   & 101M \\
\quad Tied embeddings           & \cmark & \cmark & \cmark & \cmark \\
\quad Layers                    & 12     & 12     & 28     & 12 \\
\quad Hidden dim $d$            & 1024   & 1024   & 1024   & 1024 \\
\quad Heads                     & 16 (8 KV) & 16  & 32     & --- \\
\quad Head dim                  & 64     & 64     & 64     & --- \\
\quad MLP intermediate          & 4096   & 4096   & ---    & --- \\
\quad MLP activation            & SwiGLU & SwiGLU & ---    & --- \\
\quad Positional encoding       & RoPE & None & None & None \\
\quad Normalization             & RMSNorm & RMSNorm & RMSNorm & --- \\
\quad Sequence mechanism        & GQA    & Linear Attn + Gate & SSM ($d_\text{state}{=}128$) & Recurrence \\
\quad Short convolution         & ---    & size 4 & size 4 & --- \\
\quad SSM expand factor         & ---    & $1.5\times$ (value) & $2\times$ & --- \\
\quad Dropout                   & ---      & ---      & ---      & 0.1 \\
\midrule
\multicolumn{5}{l}{\textit{Muon optimizer}} \\
\quad 2D weight LR              & \multicolumn{4}{c}{$3 \times 10^{-3}$, momentum $= 0.95$} \\
\quad Embed/norm/bias LR        & \multicolumn{4}{c}{$3 \times 10^{-4}$ (AdamW, $\beta_2{=}0.95$)} \\
\quad Weight decay              & \multicolumn{4}{c}{0.01 (2D weights only)} \\
\midrule
\multicolumn{5}{l}{\textit{AdamW optimizer}} \\
\quad Learning rate             & \multicolumn{4}{c}{$3 \times 10^{-4}$} \\
\quad $(\beta_1, \beta_2)$      & \multicolumn{4}{c}{$(0.9, 0.95)$} \\
\quad Weight decay              & \multicolumn{4}{c}{0.01} \\
\midrule
\multicolumn{5}{l}{\textit{Shared training}} \\
\quad Context length & \multicolumn{4}{c}{1024} \\
\quad Batch size                & \multicolumn{4}{c}{512 sequences (${\sim}$524K tokens/step)} \\
\quad LR schedule               & \multicolumn{4}{c}{Cosine decay, 500 warmup steps, min $= 10\%$ of peak} \\
\quad Training tokens           & \multicolumn{4}{c}{${\sim}$9.4B (1 epoch of FineWeb-Edu 10BT)} \\
\quad Precision                 & \multicolumn{4}{c}{bfloat16 mixed precision} \\
\bottomrule
\end{tabular}
\end{table}

\subsection{LSTM ablation} \label{app:lstm}
We ablate the number of layers of LSTMs and find reducing the number of layers to 4 instead of 12 does not change the phenomenon: both will learn fourier spikes but no probing performance. Both have huge condition number on $\mS_W$ yet huge $\Phi_T$ as well.

In \Cref{fig:lstm_add_residual}, we simply add residual connections to the 12-layer LSTM model and surprisingly, it's able to learn some extent of geometrical structures. It demonstrates the power of residual connections in model design. 

\begin{figure}[ht]
    \centering
    \includegraphics[width=\linewidth]{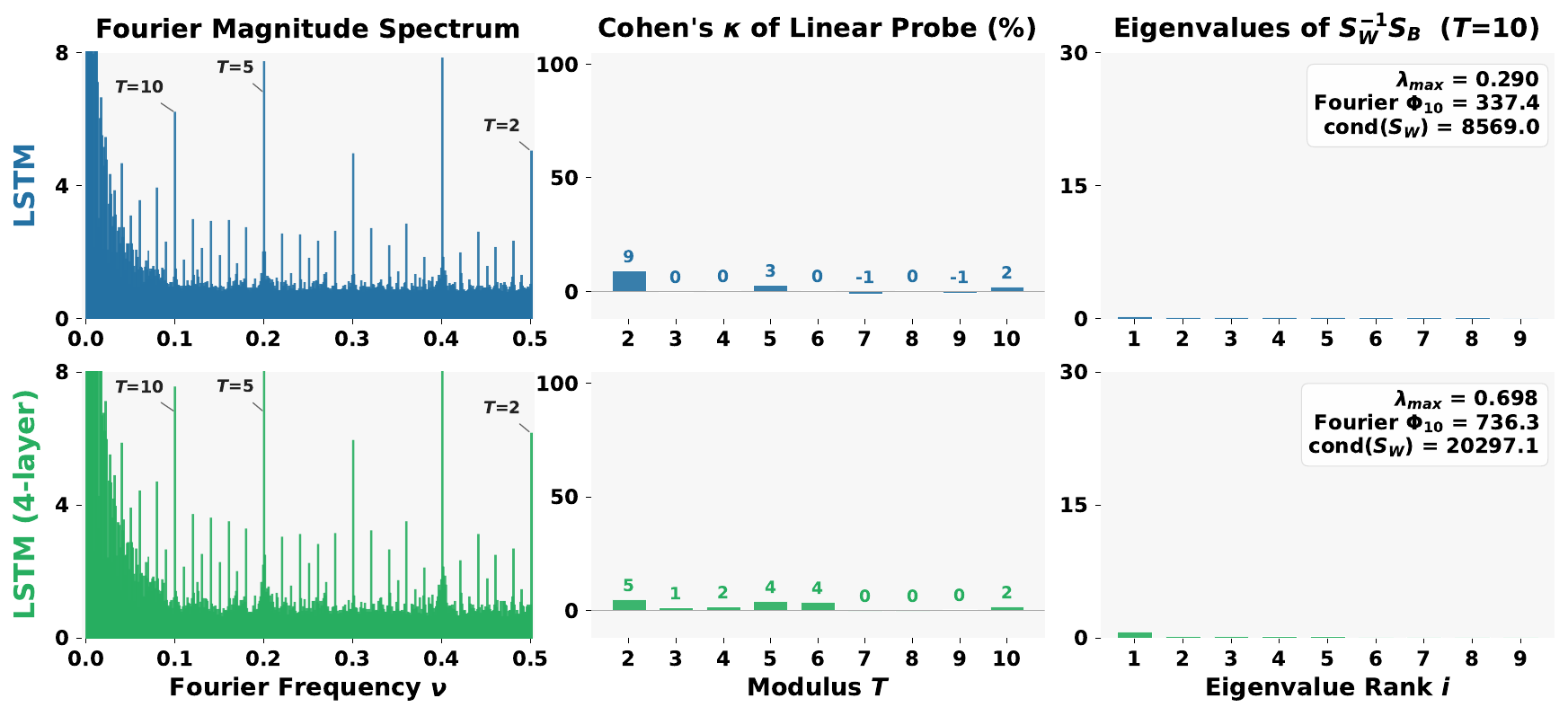}
    \caption{\textbf{Ablation on the depth of LSTM models.} We find that reducing the number of layers to 4 (green) instead of 12 (blue) does not change the phenomenon: both will learn Fourier spikes but no probing performance.}
    \label{fig:lstm}
\end{figure}

\begin{figure}[h]
    \centering
    \includegraphics[width=\linewidth]{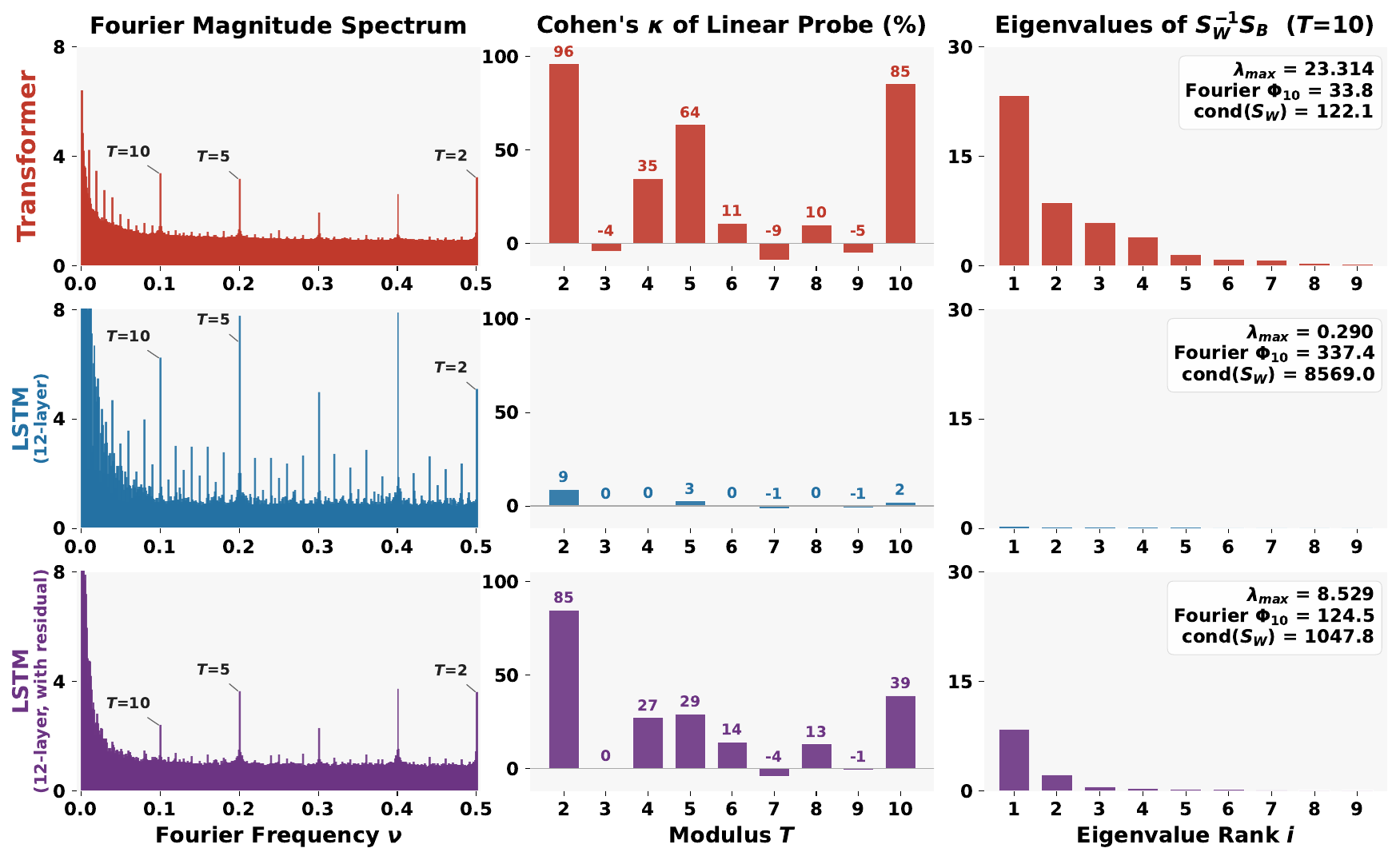}
    \caption{\textbf{Residual connections let the LSTM learn geometric structure.} Fourier spectrum (left), linear-probe Cohen's $\kappa$ (middle), and eigenvalues of $S_W^{-1}S_B$ at $T{=}10$ (right) for a Transformer, a 12-layer LSTM, and the same LSTM with residual connections. Residuals drop $\mathrm{cond}(S_W)$ from
    $8569$ to $1048$ and bring probing back from chance to $\kappa{=}85$ at $T{=}2$ and $39$ at $T{=}10$, still below the Transformer at $T{=}5,10$.}
    \label{fig:lstm_add_residual}
\end{figure}

\subsection{Data Perturbation Details} \label{app:data}
In this section, we describe in detail how we perturb data for each configurations in \Cref{tab:data_configs}.

\paragraph{Isolate-$k$ configuration.}
We design an \emph{isolate} configuration to test whether Fourier features and mod-$T$ probes can emerge when reducing the interactive between number tokens, even indirectly through intermediate text tokens across multiple layers. The key idea is to enforce a block-diagonal causal attention mask that partitions each sequence into segments, where each segment contains at most $k$ token. Concretely, given a tokenized sequence, we locate all positions containing number tokens and place segment boundaries at the midpoint of the text span between each consecutive pair of $k$ number tokens. This way, every number token still sees some surrounding context on both sides, but can never interact with any other number token, if they are not in the same segment. Within each segment, standard causal attention applies: position $i$ attends to position $j$ only if $j \leq i$ and both positions belong to the same segment. We also reset RoPE position IDs to zero at segment boundaries to avoid leaking positional information across segments, and mask the loss at boundaries so the model is not trained to predict across segment breaks. Importantly, we do not modify the training data at all. The tokenized sequences are identical to those used in the standard configuration; only the attention mask differs.

\paragraph{Context length $\ell$.}
To test the role of broad context in Fourier feature formation and geometric emergence, we train models whose effective context is limited to a fixed window of $\ell$ consecutive tokens. Each 1024-token sequence is reshaped into $\lfloor 1024 / \ell \rfloor$ independent subsequences of length $\ell$, each processed with its own standard causal attention mask. Tokens in one window cannot attend to tokens in any other window. This is equivalent to training on a corpus of short documents of length $\ell$. We experiment with $\ell \in \{2, 4, 8, 64\}$: a window of $\ell = 2$ reduces the model to learning bigram statistics (each token sees only the immediately preceding token), while $\ell = 4$ and $8$ permit short-range dependencies but still prevents any long-range co-occurrence patterns. $\ell = 64$ permits longer range dependencies but is still much shorter than \texttt{Original}'s context length of 1024. As with the isolate configuration, the underlying token sequence is unchanged; only the effective context window differs.

\paragraph{Swap numbers.}
This configuration dissociates number tokens from their original textual context while preserving natural number $n$-gram statistics. For each training sequence, we keep every text token in its exact position and replace the entire subsequence of number tokens with a contiguous, order-preserving slice of number tokens drawn from other documents in the corpus. Concretely, we pre-extract all number tokens from the full training set into a single in-memory stream, and for each sequence we substitute a randomly chosen contiguous segment from this pool. The replaced number tokens therefore retain realistic sequential patterns but lose their association with the surrounding text. This tests whether the text-number co-occurrence structure, rather than the number token statistics alone, drives Fourier feature and modular probe emergence.

\paragraph{Unigram replace.}
In the unigram configuration, every number token in the training data is independently replaced by a random draw from the corpus-wide marginal (unigram) distribution over number tokens. This destroys all sequential structure among numbers, such as their relationship to surrounding text and their co-occurrence with other numbers. But this type of perturbation exactly preserves the marginal frequency of each number token. If a number token $n$ appears with probability $p_n$ in the original corpus, it appears with the same probability in the perturbed data. By comparing to the original and swap-numbers configurations, the unigram ablation isolates whether co-occurrence statistics beyond simple token frequency are necessary for Fourier features and modular arithmetic to emerge.

\begin{figure}[t]
    \centering
    \includegraphics[width=\linewidth]{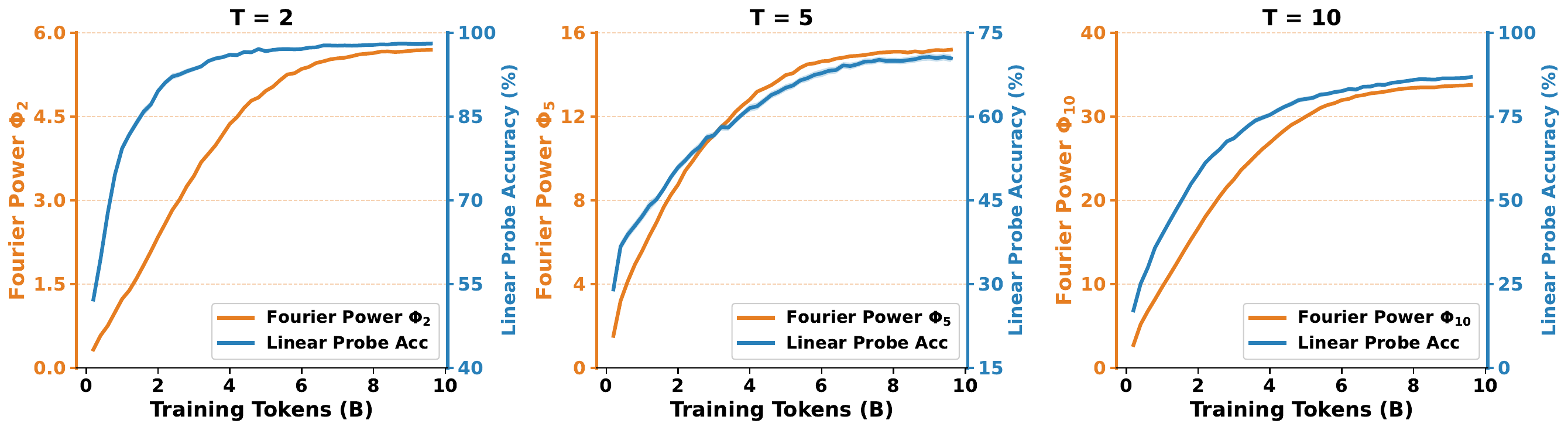}
    \caption{\textbf{Spectral and geometric convergence co-emerge gradually during pretraining.}
    Fourier power $\Phi_T$ (left axis) and linear probe accuracy (right axis) for a 300M Transformer trained with Muon, shown for $T = 2, 5, 10$.
    Both metrics increase smoothly throughout training with no phase transition, in contrast to the sudden emergence observed in grokking on modular arithmetic tasks in \citet{nanda2023progress}.}
    \label{fig:dynamics_transformer_original}
\end{figure}

\subsection{Training Dynamics} \label{app:appendix}
\Cref{fig:dynamics_transformer_original} shows that during language pretraining, both Fourier power $\Phi_T$ and linear probe accuracy increase smoothly from the start of training for $T = 2, 5, 10$, with no sudden phase transition.
This contrasts with grokking in modular arithmetic \citep{nanda2023progress}, where structured representations appear abruptly after prolonged memorization.
In language pretraining, the model continuously encounters diverse co-occurrence statistics rather than memorizing a fixed set of examples, so both tiers of convergence emerge gradually. In contrast, \Cref{fig:addition_dynamics} shows training dynamics for addition.

\begin{figure}[h]
    \centering
    \includegraphics[width=0.32\linewidth]{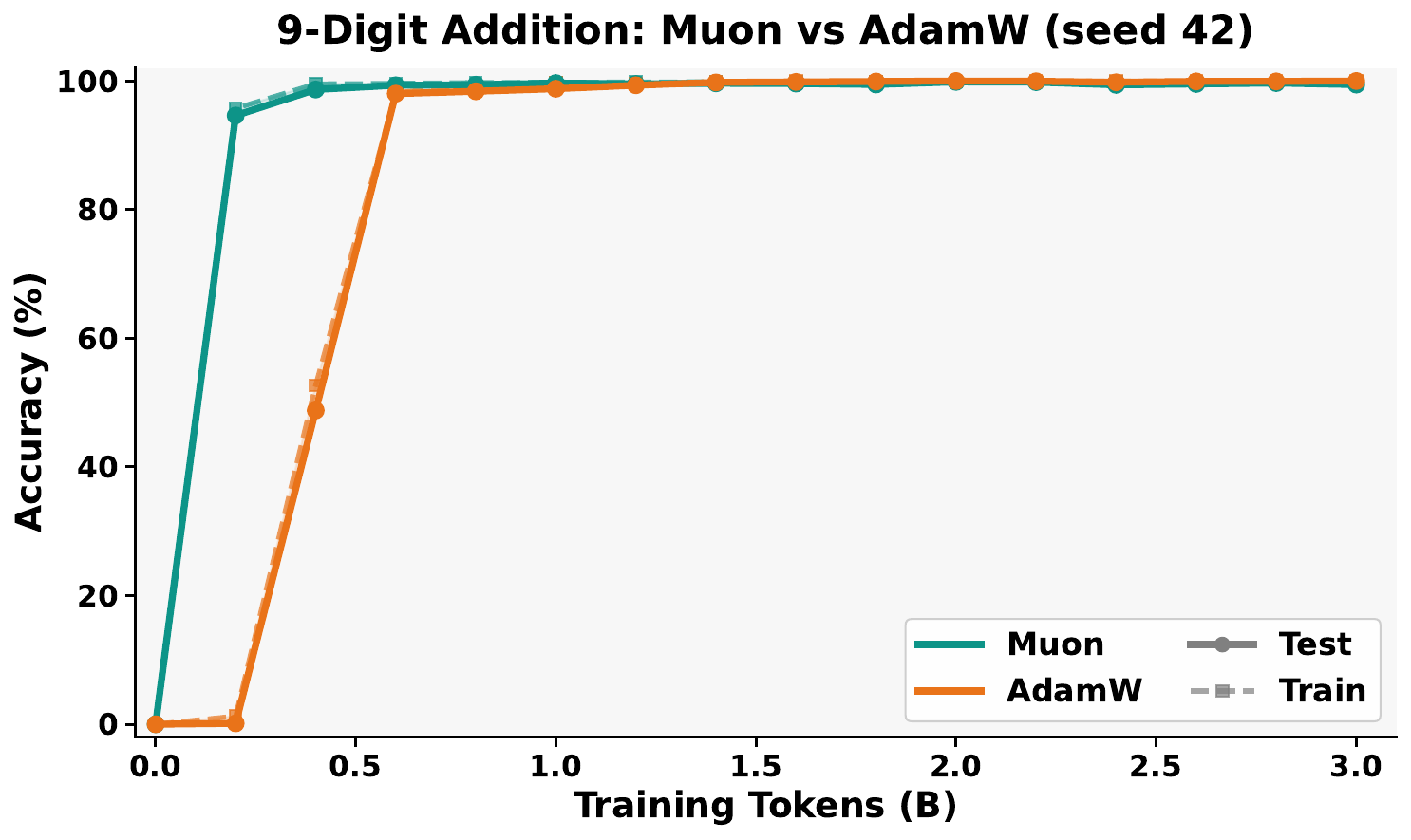}
    \includegraphics[width=0.32\linewidth]{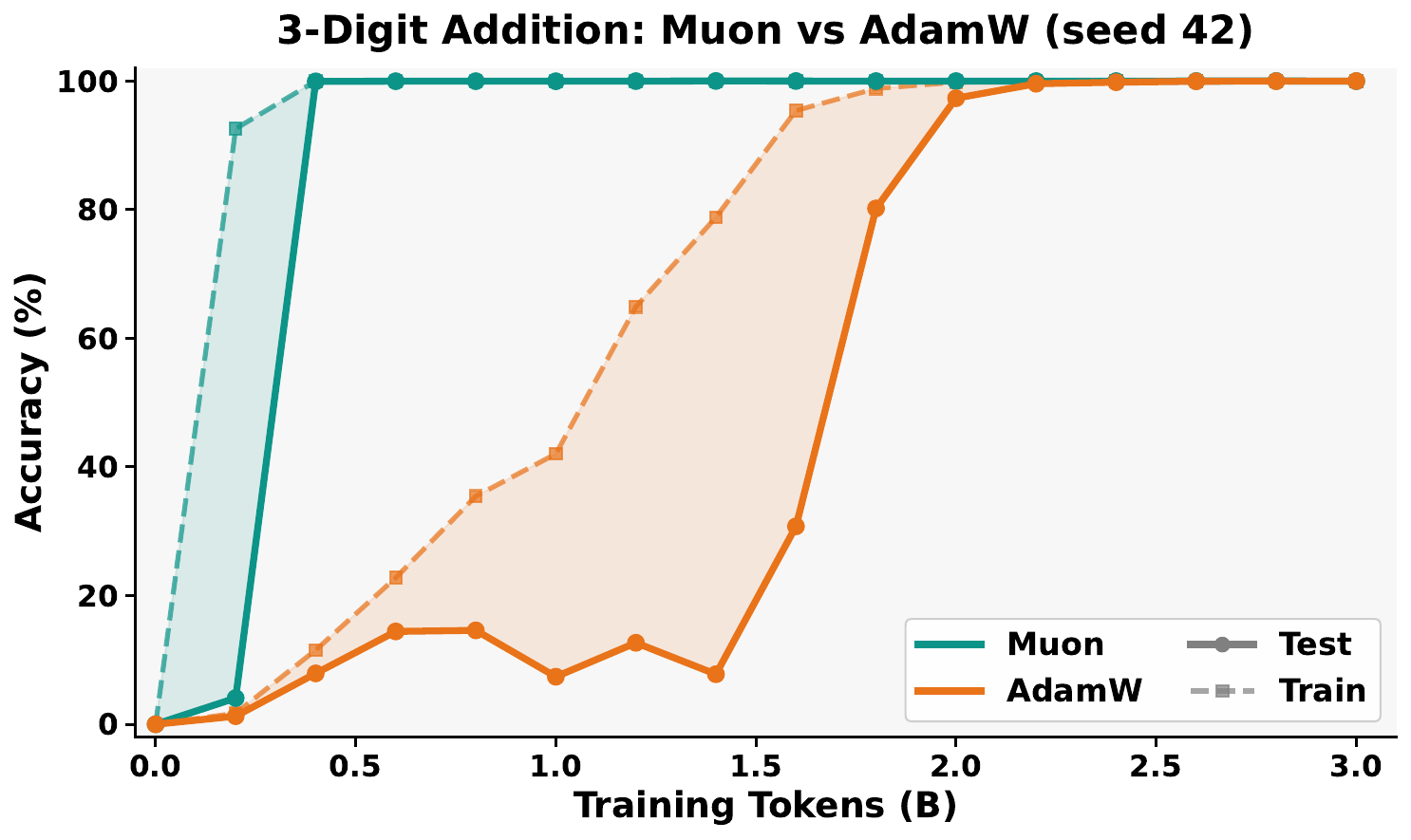}
    \includegraphics[width=0.32\linewidth]{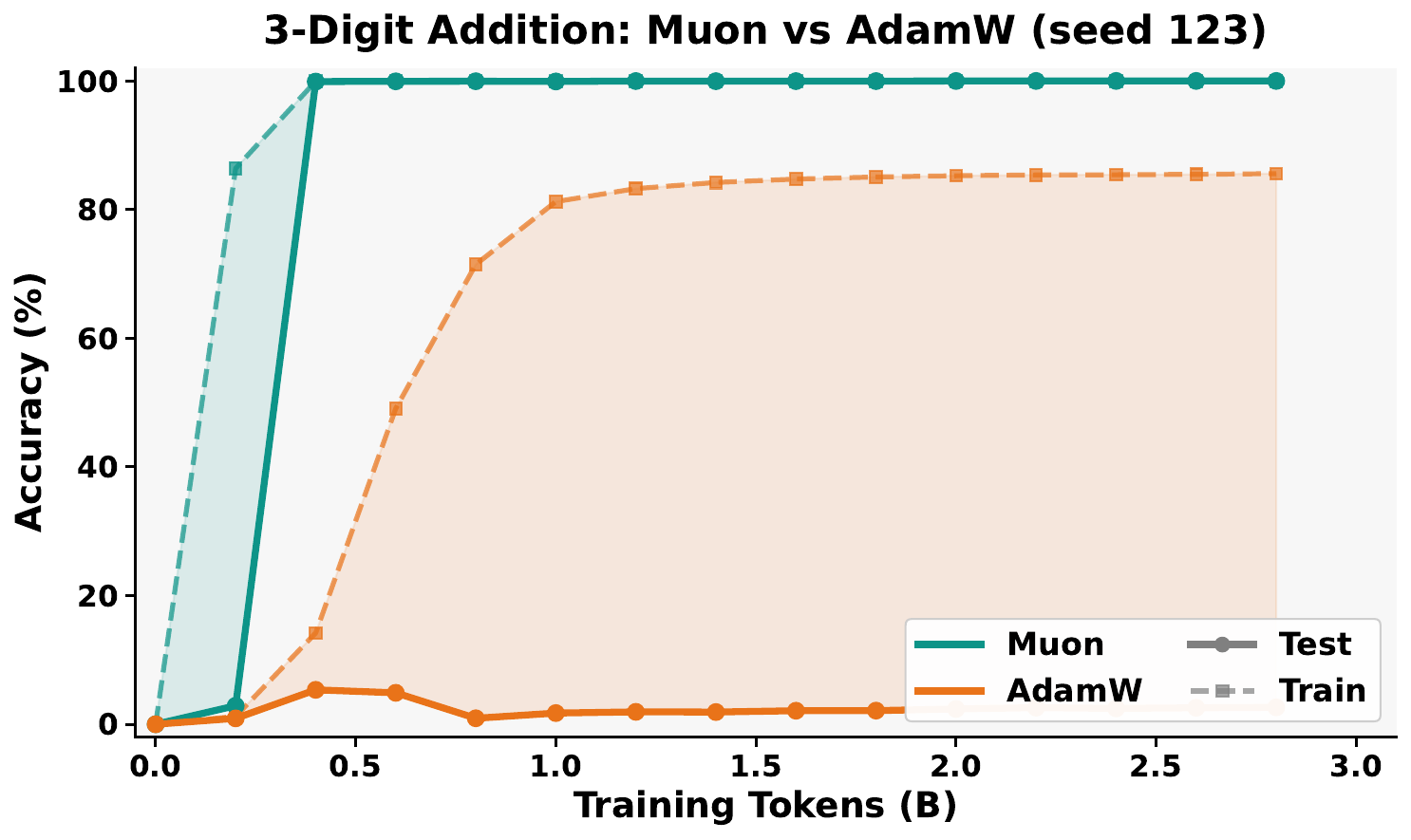}
    \caption{\textbf{Training dynamics for Transformers trained on addition (two seeds).} \textit{(Left)} In 9-digit addition, both Muon and AdamW converge smoothly  to near-perfect train and test accuracy, with no grokking phase. \textit{(Middle, Right)} In 3-digit addition, under seed 42, training accuracy reaches 100\% for both optimizers, but generalization is optimizer- and seed-dependent. AdamW exhibits a quick grokking under seed 42 (test accuracy jumps around 1.5--2B tokens) but not under seed 123 (where training accuracy can't reach 100\% and test accuracy remains random). This confirms that single-token addition imposes no consistent pressure toward structured representations.}
    \label{fig:addition_dynamics}
\end{figure}

\begin{figure}[t]
    \centering
    \includegraphics[width=0.9\linewidth]{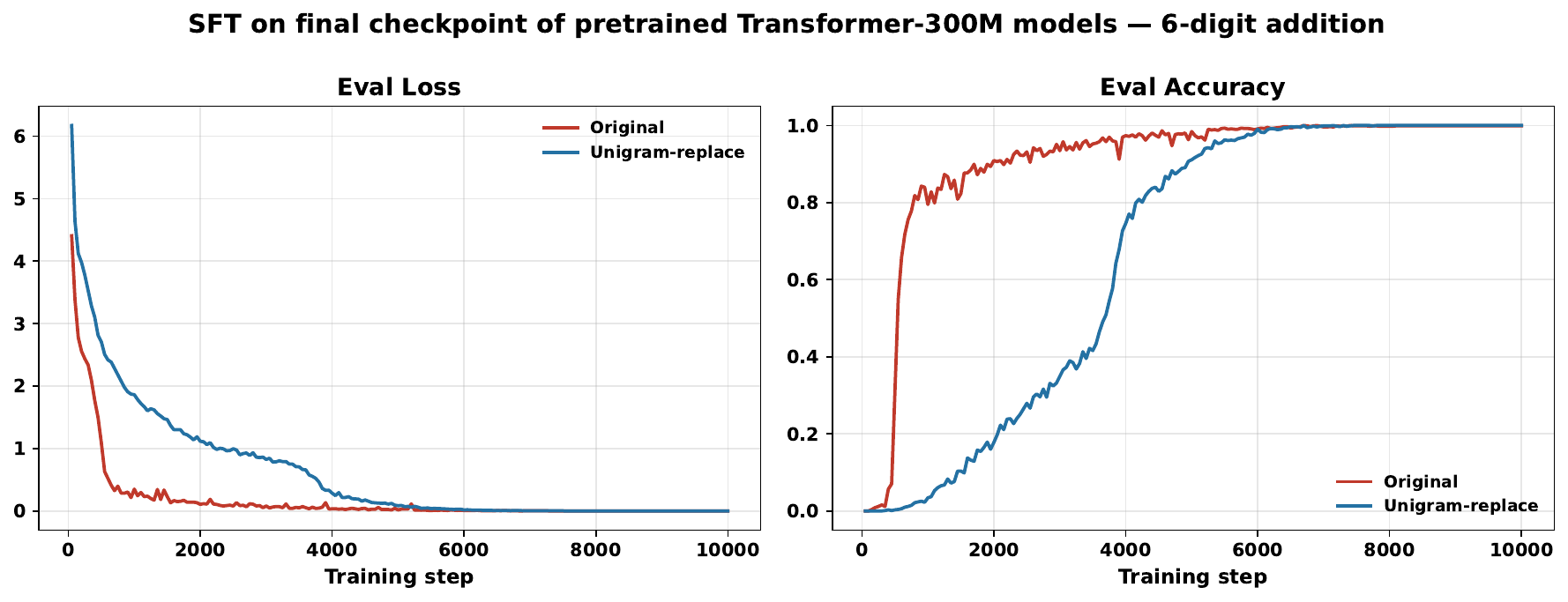}
    \includegraphics[width=0.9\linewidth]{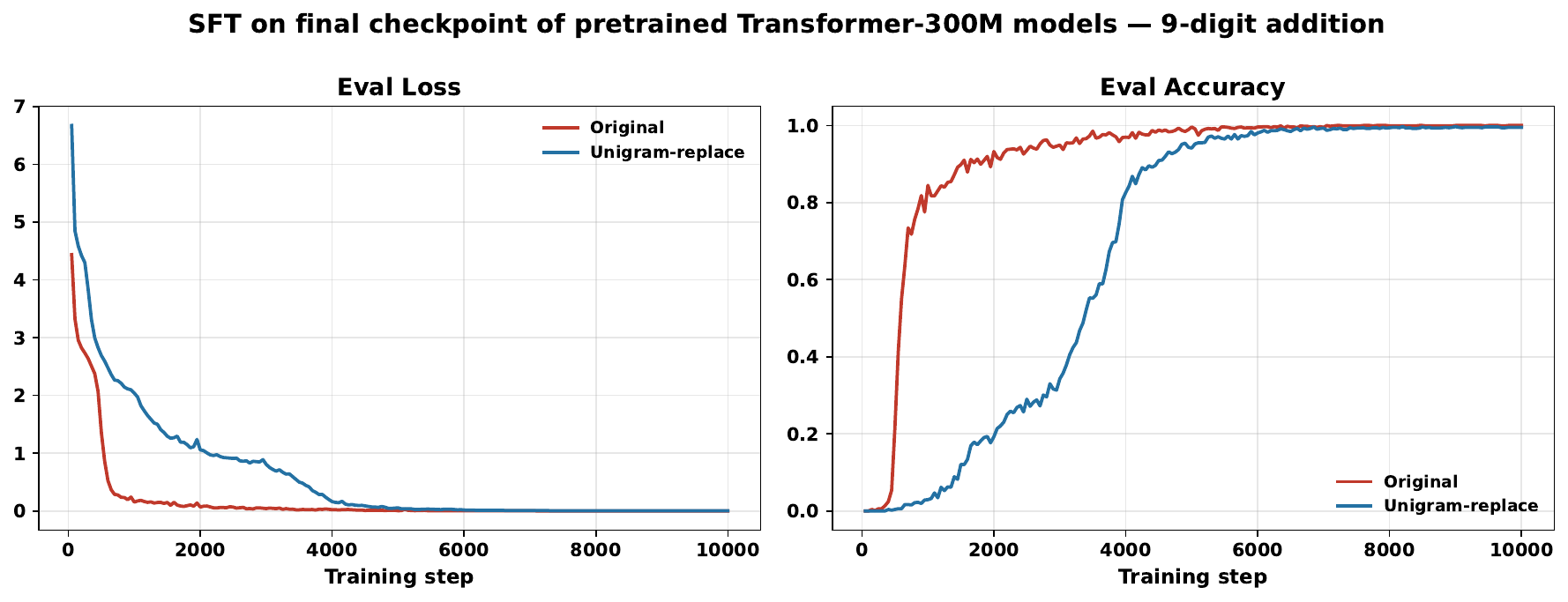}
    \caption{Geometric convergence tracks downstream arithmetic learning. SFT eval loss and accuracy on 6-digit (top) and 9-digit (bottom) addition for the Original and \texttt{Unigram-Replace} pretrained 300M Transformers. Only Original has geometric convergence, and it learns faster.}
    \label{fig:sft_addition_on_pretrained}
\end{figure}

\subsection{Relation to Downstream Tasks} \label{app:downstream}
Another thing worth noting is how spectral and geometric convergences are related to downstream tasks. Although it's hard to evaluate on real mathematical abilities of our 300M-parameter pretrained models since both the parameter count and tokens trained are too small to perform well on general mathematics, we instead design the following experiments. We take both an original and a perturbed pretrained models, where we choose \texttt{Unigram Replace} as the perturbed model, and perform supervised finetuning (SFT) on both 6-digit and 9-digit addition tasks, and track the training dynamics measured by accuracy and cross-entropy loss measured with 5,000 evaluation samples. As shown in \Cref{fig:sft_addition_on_pretrained}, no matter the task (6-digit or 9-digit addition), the original model (with better spectral and geometric convergence) converges much faster than the \texttt{Unigram Replace} variant. It demonstrates that better spectral and geometric convergence could imply better downstream performance.

\subsection{Non-Decimal Addition} \label{app:nondecimal}
Our paper centers around pretraining data with human texts where numbers are naturally represented in base-10, and it's interesting to understand if this implicit human bias leads to particular Fourier periods of 2, 5, and 10. To understand that, we design base-$10$, base-$15$ and base-$16$ versions of 6-digit addition tasks. Numbers are represented in these different bases and additions are up to 6-digit for each base. As shown in \Cref{fig:non_decimal}, the model trained on base-$10$ task could learn perfect probes on $\bmod 2, 5$ and $10$, and the model trained on base-$15$ could learn perfect probes on $\bmod 3, 5$ and $15$. We belived is because of the Chinese Remainder Theorem (CRT) where the base-$10$ class provide an inductive bias of $\mathbb Z / 10 \mathbb Z$ which is isomorphic to $\mathbb Z / 2 \mathbb Z \times \mathbb Z / 5 \mathbb Z$. Similarly, $\mathbb Z / 15 \mathbb Z$ under CRT is isomorphic to $\mathbb Z / 3 \mathbb Z \times \mathbb Z / 5 \mathbb Z$. However, as shown in \Cref{fig:non_decimal}, the model trained with base-$16$ learns no better than random guessing, and learns random fourier frequencies. This is because under CRT, one can't decompose $\mathbb Z / 16 \mathbb Z$ in to $\mathbb Z / p \mathbb Z \times \mathbb Z / q \mathbb Z$ where $p$ and $q$ are co-primes. Hence, the model can't have enough incentives to learn subgroups like $\mathbb Z / 2 \mathbb Z$. 

\begin{figure}[t]
    \centering
    \includegraphics[width=\linewidth]{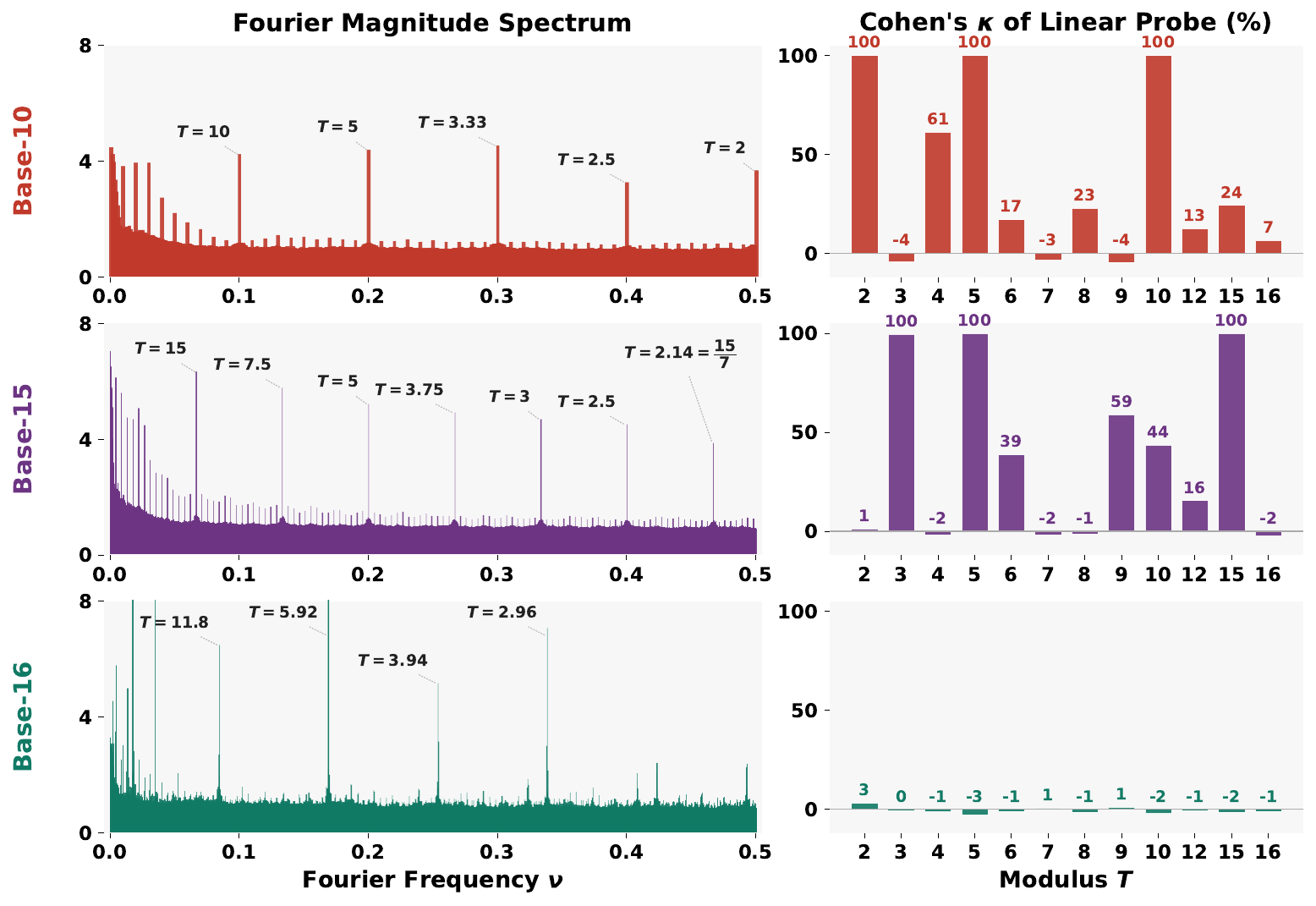}
    \caption{Base determines which moduli become separable, following the Chinese Remainder Theorem. Base-10 ($\cong\mathbb{Z}/2\times\mathbb{Z}/5$) learns mod-2,5,10. Base-15 ($\cong\mathbb{Z}/3\times\mathbb{Z}/5$) learns mod-3,5,15. Base-16 learns neither Fourier spikes nor probes.}
    \label{fig:non_decimal}
\end{figure}

\subsection{Further Experiments on Effects of Tokenizers} \label{app:tokenizer}
We want to further understand the effects of tokenizers, and we design the following experiment: we work with 6-digit addition, with two tokenizer, one is \texttt{3-digit} which is similar to the Llama-3 tokenizer that large numbers are chunked every 3 digits so that 6-digit numbers have 2 tokens, and the other is \texttt{6-digit} tokenizer where for 6-digit addition tasks, most numbers are tokenized into \textbf{single-token}s. As shown in \Cref{fig:tokenizer_ablate_6digit}, only the \texttt{3-digit} tokenizer learns the desired Fourier spectrum and $\bmod T$ probes for $T = 2, 5,$ and $10$, and the \texttt{6-digit} tokenizer could not learn Fourier spectrum or $\bmod T$ probes.

\begin{figure}[t]
    \centering
    \includegraphics[width=\linewidth]{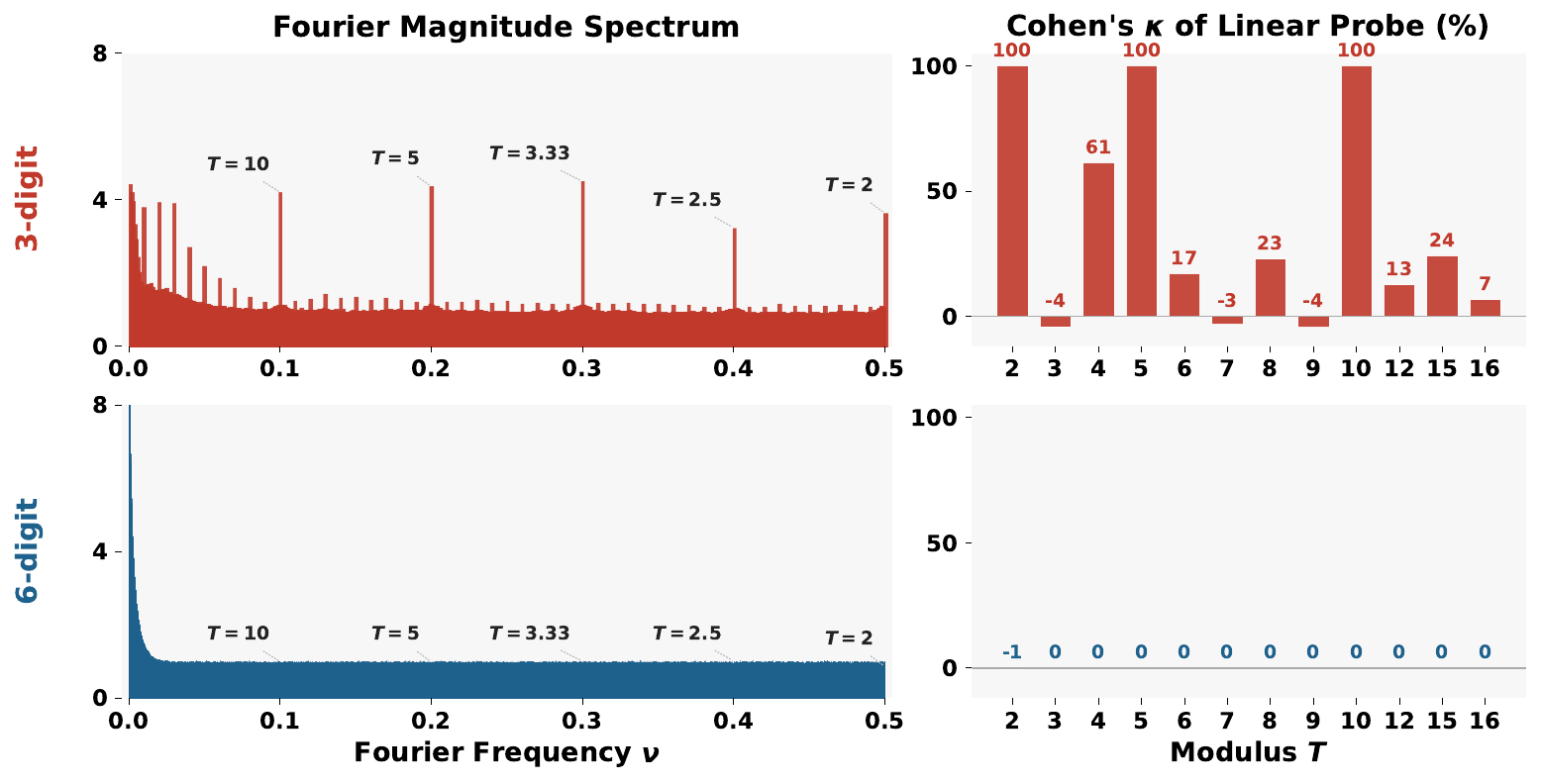}
    \caption{\textbf{Tokenizer decides convergence on 6-digit addition}. The 3-digit (Llama-3-style) tokenizer chunks numbers into multiple tokens and learns the mod-2,5,10 spectrum and probes. The 6-digit tokenizer makes numbers single tokens and learns nothing.}
    \label{fig:tokenizer_ablate_6digit}
\end{figure}

\subsection{Mod-$T$ Probing on Large Pretrained Models} \label{app:mod_probe_more}
To verify that geometric convergence holds beyond our 300M models, we run the same linear probe on $N$ single-token numbers (where $N$ is the total amount of single tokens numbers for each model) for eight pretrained models from 124M to 671B parameters (Table~\ref{tab:single-token-probe}). Mod-2, mod-5, and mod-10 are near-perfect across Transformers, SSMs, and an MoE. The moduli coprime to 10 behave differently. Mod-3, mod-7, and mod-9 sit at chance for the smaller models but become decodable
as scale grows or by newer models, reaching $\kappa{=}84.4$ (mod 7) and $92.0$ (mod 9) for DeepSeek-V3.

\begin{table}[t]
\centering
\caption{Mod-$T$ probe (Cohen's $\kappa$, \%) on large pretrained models.
Linear probe predicting
$n \bmod T$ from $e(n)$ for single-token numbers $0$--$999$.
Bold marks $\kappa \geq 99$.
Moduli coprime to $10$, such as 3, 7, and 9, stay mostly at chance.}
\label{tab:single-token-probe}
\small
\setlength{\tabcolsep}{4pt}
\begin{tabular}{llr >{\columncolor{modblue}}c cc >{\columncolor{modblue}}c cccc >{\columncolor{modblue}}c}
\toprule
Model & Params & $N$ & \multicolumn{9}{c}{$\kappa$ for $n \bmod T$ (\%)} \\
\cmidrule(lr){4-12}
 & & & 2 & 3 & 4 & 5 & 6 & 7 & 8 & 9 & 10 \\
\midrule
GPT-2           & 124M & 819  & \textbf{99.4}  & -3.5  & 64.5 & 98.5          & 9.6  & -9.4  & 36.9 & -6.9 & \textbf{99.9}  \\
GPT-2 XL        & 1.5B & 819  & \textbf{99.8}  & -10.4 & 77.6 & 98.4          & 4.0  & -12.8 & 32.0 & -8.7 & \textbf{99.4}  \\
Llama-3.2-1B    & 1.3B & 1000 & \textbf{100.0} & 34.3  & 95.7 & \textbf{99.7} & 35.2 & -7.1  & 29.9 & -2.9 & \textbf{99.8}  \\
Mamba-2.8B      & 2.8B & 877  & \textbf{99.8}  & \textbf{99.4} & 96.8 & \textbf{99.9} & \textbf{99.3} & 5.7 & 49.5 & 30.6 & \textbf{99.9} \\
Llama-3.2-3B    & 3.2B & 1000 & \textbf{100.0} & 73.7  & 98.8 & \textbf{99.9} & 60.6 & -10.1 & 53.6 & -1.1 & \textbf{99.9}  \\
Falcon-Mamba-7B & 7B   & 999  & \textbf{100.0} & 48.5  & 84.0 & \textbf{100.0}& 53.4 & -8.1  & 41.4 & 3.3  & \textbf{100.0} \\
Llama-3-8B      & 8B   & 1000 & \textbf{100.0} & \textbf{99.8} & 82.0 & \textbf{100.0} & 97.4 & -10.2 & 54.5 & 10.5 & \textbf{100.0} \\
DeepSeek-V3     & 671B & 1000 & \textbf{100.0} & 98.8  & 98.4 & \textbf{99.8} & 98.1 & 84.4  & 95.4 & 92.0 & \textbf{99.9}  \\
\bottomrule
\end{tabular}
\end{table}

\subsection{Multi-Token Number Representations} \label{app:multi_token_probe}
We extend the probe to numbers up to $999{,}999$, which the BPE tokenizer splits into 2--3 tokens, and read $n \bmod T$ off the last token's middle-layer hidden state (Table~\ref{tab:multi-token-probe}). This is a harder test, since the last token alone does not pin down $n$, so the model has to carry information across tokens through attention or recurrence.
Mod-2, 5, 10, and 100 stay near-perfect for the pretrained models, and separability extends to mod-1000 and mod-10000.
Mod-7 stays at chance, matching the single-token pattern for moduli coprime to 10.

\begin{table}[h]
\centering
\caption{Middle-layer probe $\kappa$ (\%) on multi-token numbers (up to
$999{,}999$, which has 2 to 3 tokens each).
We probe the last token's middle-layer hidden state on $n \bmod T$ tasks. These smaller-scale LLMs show strong probe performance on mod 2, 5, 10, 100, and 1000 but at chance performance at mod 7.}
\label{tab:multi-token-probe}
\small
\setlength{\tabcolsep}{5pt}
\begin{tabular}{lrrrrrrr}
\toprule
Model & $\bmod 2$ & $\bmod 5$ & $\bmod 7$ & $\bmod 10$ & $\bmod 100$ & $\bmod 1000$  \\
\midrule
GPT-2 (124M)     & 100.0 & 100.0 & -1.2 & 100.0 & 99.8  & 98.1   \\
GPT-2 XL (1.5B)  & 100.0 & 100.0 & -1.8 & 100.0 & 99.9  & 99.0   \\
Llama-3.2-1B     & 99.7  & 97.7  & -0.1 & 98.1  & 96.3  & 88.4 \\
Llama-3.2-3B     & 99.8  & 99.0  & -0.2 & 99.1  & 98.4  & 92.1 \\
Llama-3-8B       & 99.3  & 99.1  & -0.5 & 99.1  & 98.5  & 92.1 \\
Mamba-2.8B       & 100.0 & 100.0 & 2.7  & 100.0 & 100.0 & 98.7 \\
Falcon-Mamba-7B  & 100.0 & 100.0 & -0.4 & 100.0 & 100.0 & 92.7  \\
\bottomrule
\end{tabular}
\end{table}

\clearpage
\subsection{Modular Probe Results with MLP and RFM Probes} \label{app:mlp_rfm}
In this section, we present the modular probes similar to \Cref{fig:data,fig:arch_opt,fig:addition} but with RFM probes and 2-layer MLP probes with hidden layer of size 64. 
\begin{figure}[h]
    \centering
    \includegraphics[width=\linewidth]{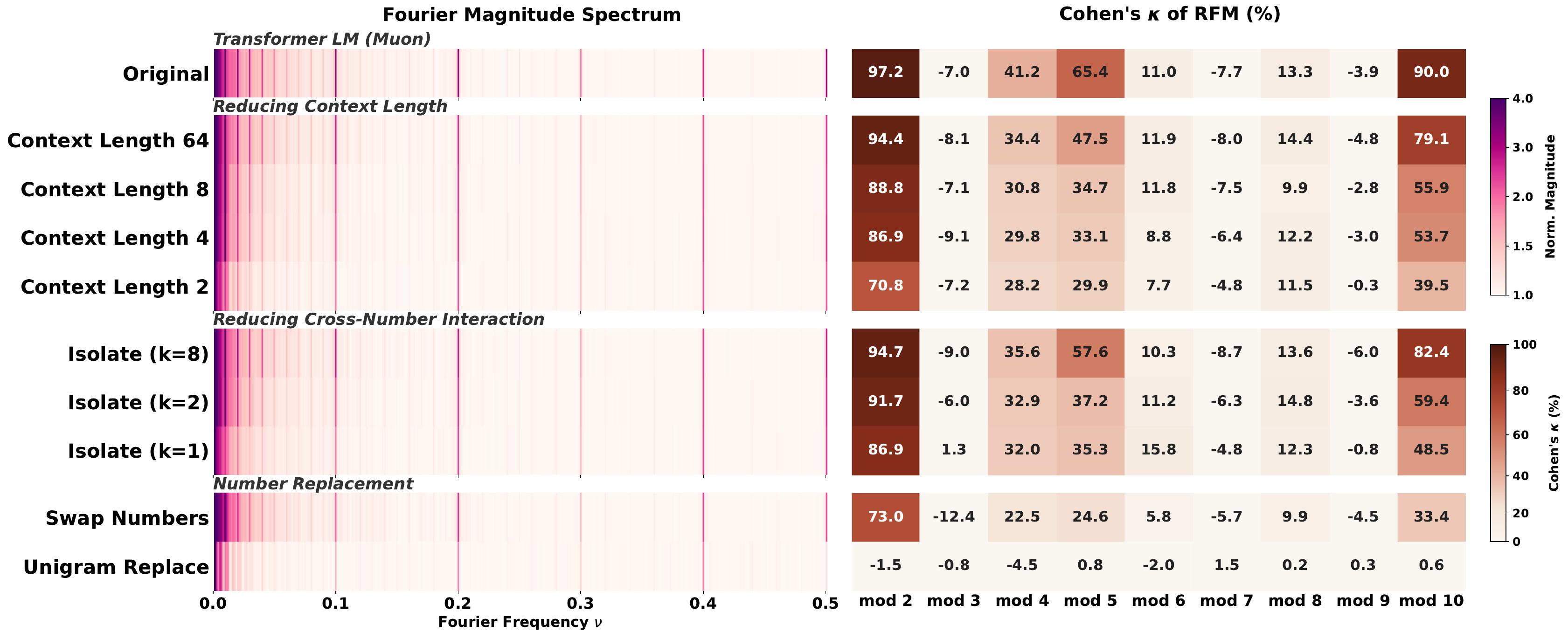}
    \includegraphics[width=\linewidth]{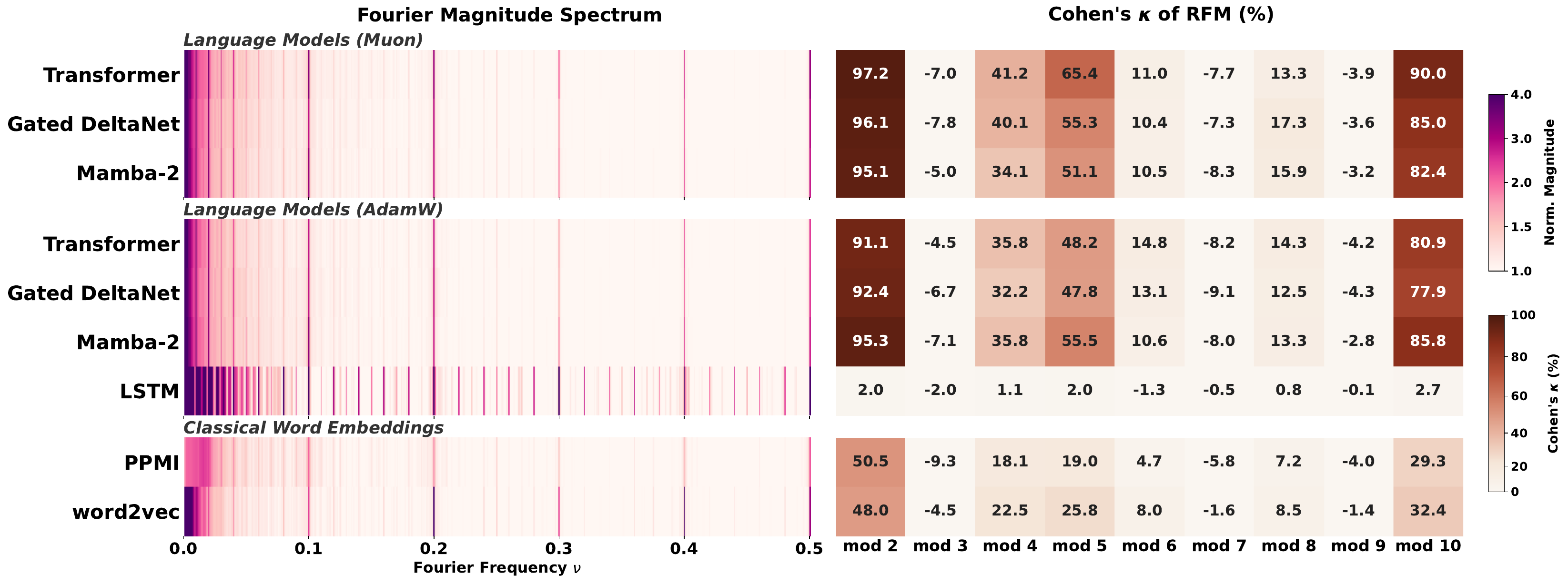}
    \includegraphics[width=\linewidth]{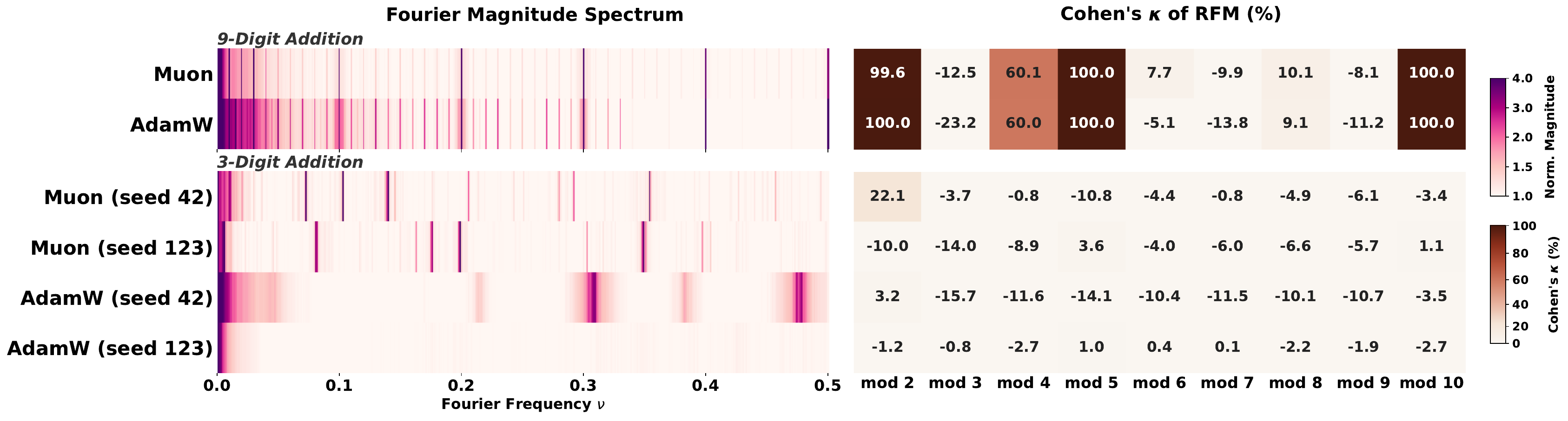}
    \caption{Structural Attribution Results with RFM probes.}
    \label{fig:probe_rfm}
\end{figure}

\begin{figure}[h]
    \centering
    \includegraphics[width=\linewidth]{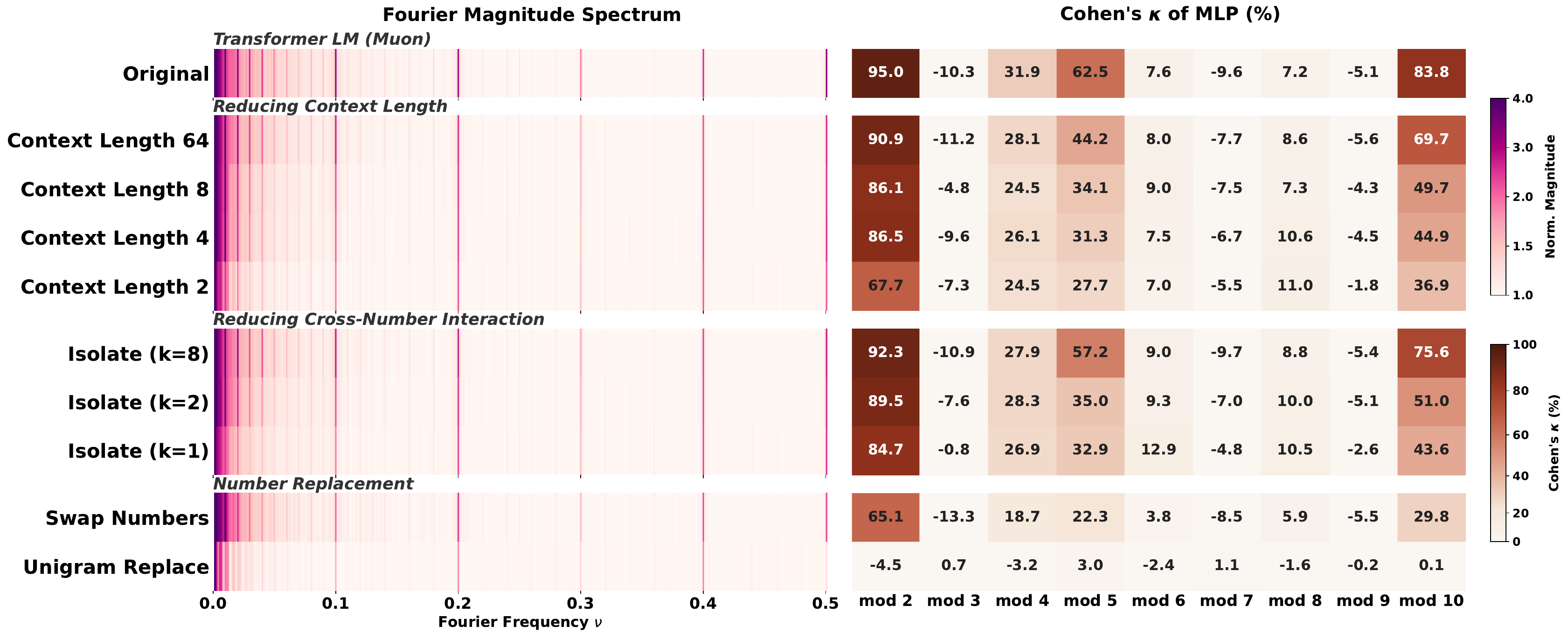}
    \includegraphics[width=\linewidth]{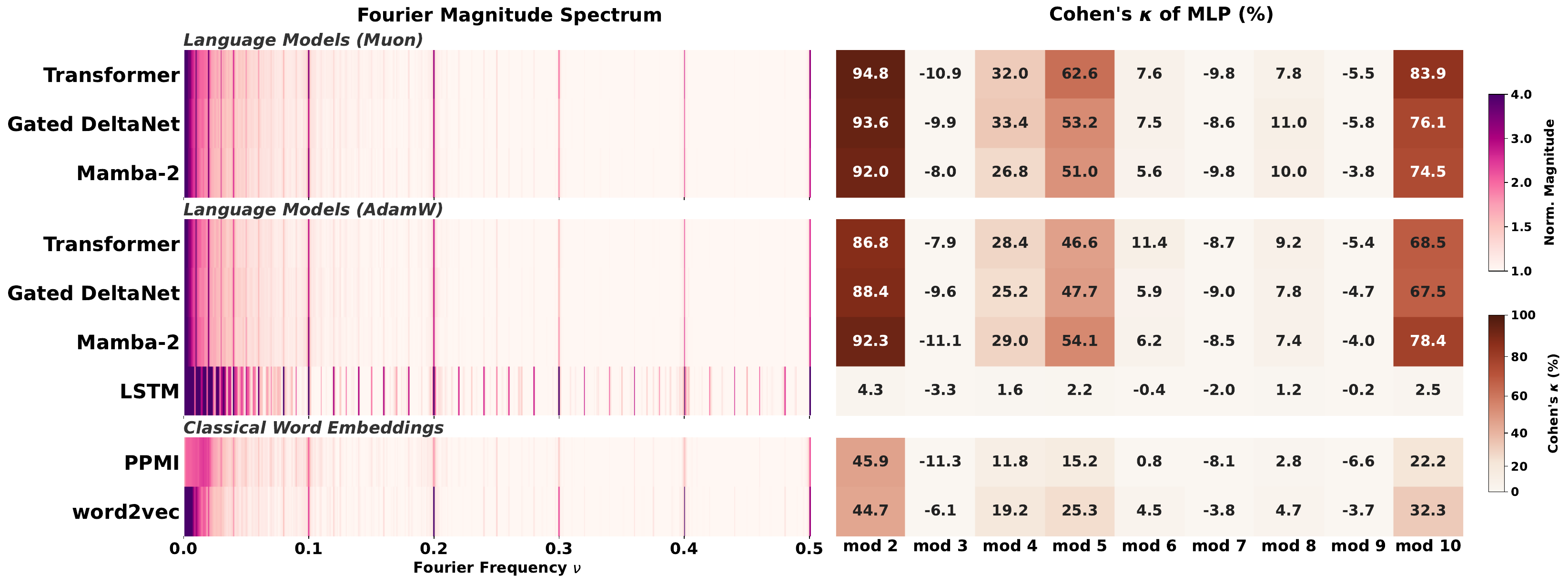}
    \includegraphics[width=\linewidth]{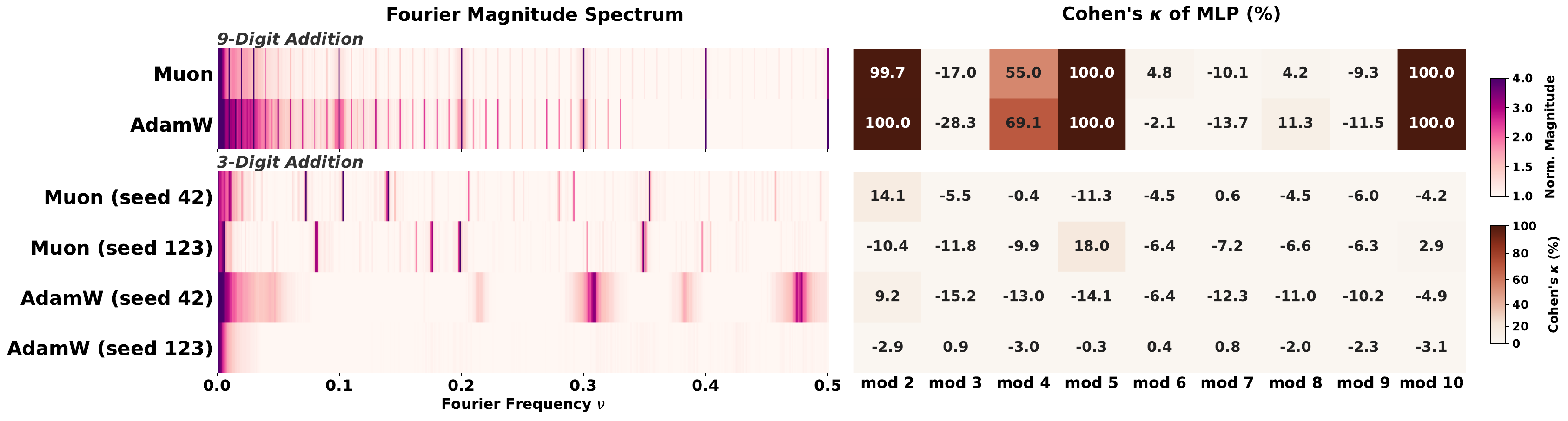}
    \caption{Structural Attribution Results with 2-layer MLP probes.}
    \label{fig:probe_mlp}
\end{figure}

\clearpage
\paragraph{Circular Probes for Transformers Trained on Addition.} To probe how number tokens are represented in the embedding layer, we train circular probes on the token embeddings for numbers.
Beyond $T$-class modular probes, which test linear separability in a $(T{-}1)$-dimensional subspace, a circular probe tests angular separability in 2-D: it learns a linear map $W \in \mathbb{R}^{d\times 2}$ projecting each embedding onto the unit circle, and classifies by cosine similarity to $m$ anchor directions at $\theta_k = 2\pi k/m$. 
\begin{figure}
    \centering
    \includegraphics[width=0.75\linewidth]{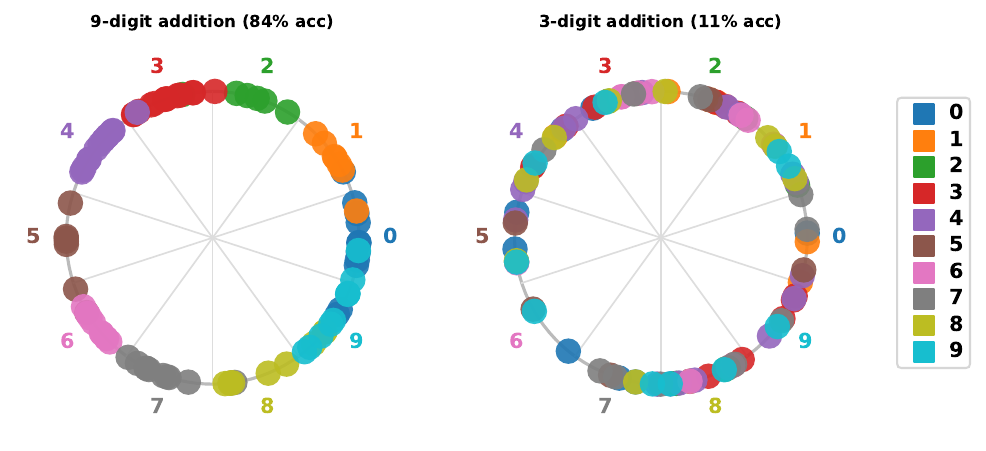}
    \caption{Circular probe projections of token embeddings onto the unit circle for mod-10
classification. Each point is a number token $n \in [0, 999]$ projected to 2-D by the probe
and normalized; color indicates $n \bmod 10$. (\textit{Left}) embeddings from a model trained
on 9-digit addition cluster sharply by residue class ($84\%$ test accuracy), indicating that
the token embedding layer has learned a geometrically organised, clock-like representation.
(\textit{Right}) embeddings from a model trained on 3-digit addition show no angular
structure ($11\%$ test accuracy, near chance), consistent with the absence of Fourier peaks
in Figure~\ref{fig:addition}.}
    \label{fig:circular_probe}
\end{figure}
\end{document}